%% file: main.tex
\documentclass{article}

\usepackage[preprint]{neurips_2026}

\usepackage[utf8]{inputenc} 
\usepackage[T1]{fontenc}    
\usepackage{hyperref}       
\usepackage{url}            
\usepackage{booktabs}       
\usepackage{amsfonts}       
\usepackage{nicefrac}       
\usepackage{microtype}      
\usepackage{xcolor}         
\usepackage{amsmath}
\usepackage{graphicx}
\usepackage{enumitem}
\usepackage{cleveref}
\usepackage{amsthm}
\usepackage{subcaption}
\newtheorem{proposition}{Proposition}
\usepackage{wrapfig}

\newcommand{\mixer}{\textsc{MIX}}
\newcommand{\yrcite}[1]{\citeyearpar{#1}}

\title{Generalized Intention Modeling in Multi-Agent Reinforcement Learning}

\author{%
  \textbf{Mateusz Odrowaz-Sypniewski}\quad 
  \textbf{Jasmine Bayrooti}\quad 
  \textbf{Ajay Shankar}\quad 
  \textbf{Amanda Prorok} \\
  Department of Computer Science and Technology, University of Cambridge, UK \\
  \texttt{\{msao2, jgb52, as3233, asp45\}@cst.cam.ac.uk}
}

\begin{document}

\maketitle

\begin{abstract}
Modeling an opponent’s intent is critical for effective decision-making in non-cooperative, competitive, and general-sum multi-agent reinforcement learning. Existing opponent modeling methods encode intent using an embedding derived from episode information chosen a priori, such as the opponent’s next action or a future environment state, and use this to guide the ego-agent’s behavior. These approaches assume that the chosen information is universally representative of intent; however, we show empirically that this is not the case as intentions are often task- and environment-dependent. To address this, we introduce a task-adaptive opponent modeling framework that learns a performance-driven mixture of multiple intent representations. We further introduce a new intention representation that maximizes mutual information with the ego-agent’s future returns, thereby capturing opponent information that is most directly relevant to performance. Our approach consistently matches or exceeds the performance of state-of-the-art baselines across diverse tasks and yields insights into when and why different opponent modeling strategies succeed.
\end{abstract}

\input{sections/01-intro}

\input{sections/02-related}

\input{sections/03-preliminaries}

\input{sections/04-method}

\input{sections/05-analysis}

\input{sections/06-eval}

\input{sections/07-conclusion}


\section*{Acknowledgements}
This work is supported by European Research Council (ERC) Project 949940 (gAIa) and ARL DCIST CRA W911NF-17-2-0181. J. Bayrooti is supported by a DeepMind scholarship and the Cambridge Department of Computer Science and Technology. We gratefully acknowledge their support.

\bibliography{main}
\bibliographystyle{plainnat}

\newpage
\appendix
\input{sections/appendices}


\newpage

\end{document}

%% file: sections/01-intro.tex
\section{Introduction}
\label{sec:intro}

Optimal decision-making in multi-agent reinforcement learning (MARL) can be complicated by the absence of communication, partial observability, and the unknown intentions of other agents~\citep{albrecht_autonomous_2018}. Understanding these intentions can significantly improve the task performance of an ego-agent operating in the shared environment~\citep{pmlr-v48-he16}. For instance, the ability to forecast a predator's future trajectory can aid a prey attempting to evade it; similarly, predicting future game states in chess can inform the winning strategy. Inspired by ``theory of mind''~\citep{premack_does_1978}, the concept that humans reason about the mental states of others to predict their future behavior, a widely adopted approach is to explicitly model the other agents~\citep{albrecht_autonomous_2018}. Recent work in this space focuses on learning compact latent representations of opponents' intentions within a deep RL framework~\citep{papoudakis_agent_2021,yu2024,fu_greedy_2022,meliba}.

These representations are often trained to encode an aspect of the multi-agent episode, most commonly the opponents' next actions~\citep{pmlr-v80-grover18a, meliba, tacchetti2018relational} or a future state~\citep{yu2024}.
We refer to these aspects as \textit{episode components}: distinct elements of a MARL episode comprising agent observations, actions, and rewards. Existing methods all implicitly assume a particular episode component (or some combination) to be the most informative and learn its representation \textit{irrespective of the environment and task}. Doing so assumes that one fixed prior is invariably preferable to another, ignoring the broader question: how does the optimal representation of `intent' depend on the task itself?

Central to our work, we posit that the optimal episode component to model is the one that best informs the ego-agent's strategy and maximizes its performance, and is therefore dependent on the environment, just as the strategy itself.
Consider, as two motivating examples, the games of rock-paper-scissors and chess.
In rock-paper-scissors, predicting the opponent's next action is crucial, while modeling future environment states provides little additional benefit due to statelessness.
On the other hand, in chess, while anticipating an opponent's immediate move is certainly useful, strong play fundamentally requires reasoning about longer-term consequences and future board configurations.
Modeling intent using the same episode component for both would likely be suboptimal; intuitively, the appropriate component should depend on the environment characteristics.

To tackle this issue, we propose an opponent modeling framework that optimizes the information-performance tradeoff by explicitly selecting the intention representation that maximizes ego-agent returns.
We first propose an adaptive architecture that explicitly models environment-dependent weightings over several definitions of intent. Inspired by mixture-of-experts architectures~\citep{shazeer2017, jiang2024mixtralexperts}, we train separate modules that produce embeddings predictive of actions, observations, future states, and rewards, align them in a shared latent space, and combine them on-the-fly using a learned mixing module.
This design allows the ego-agent to redefine its notion of intent based on the environment: the weighted combinations do not require explicit interpretations and can thus represent more expressive and malleable definitions without human biases.

Second, we introduce a method that explicitly models the ego-agent's future rewards as a distinct episode component. We encode embeddings predictive of these rewards using a contrastive InfoNCE objective~\citep{oord2019representationlearningcontrastivepredictive} to maximize the mutual information between the embeddings and the observed future rewards. This allows us to force the representation to discard noisy state details and retain only features that are directly relevant to the ego-agent's value function.

We empirically demonstrate that our adaptive mixer architecture yields more robust and stable performance than approaches relying on any single episode component in isolation. Additionally, we establish that in some environments, modeling opponent intentions via future ego-agent rewards produces more informative representations than modeling future states or actions.
These results hold on several multi-agent benchmarks commonly used in opponent modeling, including Level-Based Foraging~\citep{christianos2020shared, papoudakis2021benchmarking}, a partially observable version of Predator-Prey~\citep{lowe2017}, Kuhn Poker~\citep{Kuhn1950}, and a customized Google Research Football~\citep{kurach2020googleresearchfootballnovel} scenario with 6 opponents, demonstrating consistent improvements over state-of-the-art baselines. In summary, our main contributions are as follows:
\begin{enumerate}[itemsep=0pt,leftmargin=*]
    \item We propose an adaptive opponent modeling architecture that combines multiple episode components via a mixing mechanism, allowing the model to prioritize the most informative components for a given environment.
    
    \item We introduce reward-predictive intention embeddings for opponent modeling, enabling agents to capture the future impact of other agents on their own returns.

    \item We demonstrate that the adaptive approach yields stronger and more stable performance than modeling any single episode component in isolation (as is standard in prior opponent modeling works), and provide post-hoc analyses showing how its preferences vary across diverse environments.

    \item We provide new data-driven insights into the performance of different opponent modeling methods, how they generalize to previously unseen opponents, and how their effectiveness depends on the opponent policies themselves.
\end{enumerate}

%% file: sections/02-related.tex
\section{Related Work}
\noindent\textbf{Opponent Modeling in Deep RL.}
Early deep MARL opponent modeling relied on restrictive auxiliary assumptions. DRON~\citep{pmlr-v48-he16} employs a mixture-of-experts to approximate Q-value functions representing different opponent strategies. However, it relies on explicit, hand-crafted features to model the opponents (limiting its scalability to complex environments) and is specific to the DQN algorithm~\mbox{\citep{mnih2013playingatarideepreinforcement}}. Self Other-Modeling (SOM)~\citep{raileanu_modeling_2018} and subsequent works~\citep{wu_too_2021, huang_efficient_2024} presume the agents always follow a specific goal from a fixed, predefined set of tasks, known as a goal library. This assumption reduces opponent modeling to simply inferring the opponent's active task; however, such information is unavailable in many general environments. In contrast, our approach generalizes to arbitrary MARL environments without relying on such assumptions and remains agnostic to the underlying learning algorithm.

\noindent\textbf{Learning Opponent Representations.}
A popular approach in opponent modeling is to model low-dimensional representations of opponent policies and use them to condition the policy of the ego-agent. Most works focus on training embeddings to be predictive of the opponents' next actions~\citep{pmlr-v80-grover18a, tacchetti2018relational}. Rabinowitz et al.~\yrcite{rabinowitz_machine_2018} propose ToM-net, an algorithm that models two embeddings per agent: one representing the long-term character and the other capturing the current state of mind. MeLIBA~\citep{meliba} implements this idea using a hierarchical VAE~\citep{Kingma2014, zhao2017learning}. OMIS~\citep{omis} uses a pre-trained transformer to predict opponent actions; instead of simply conditioning the policy on these predictions, it maintains a model of the environment and performs decision-time search, rolling out the episode in imagination.

All of the above methods share two key limitations: (i) they assume full access to the opponents' trajectories during execution, and (ii) they assume that opponent actions are universally predictive of the opponents' true intentions, regardless of the environment dynamics.
LIAM~\citep{papoudakis_agent_2021} addresses (i) via a recurrent architecture that models opponent actions and observations under partial observability and only requires access to them during training. OMG~\citep{yu2024} models embeddings predictive of future states, arguing that predicting only the immediate next action is short-sighted.
Both of these approaches address limitation (ii) only partially; they recognize that modeling embeddings based only on opponent actions may be suboptimal, but they merely shift the problem by defining their own fixed intention representations. 

We address both limitations by requiring opponent trajectories only during training, relying solely on local observations at execution, and dynamically adapting our intention representation to focus on the episode components most informative for the task at hand.

\noindent\textbf{Single-Episode vs. Cross-Episode Modeling.}
The works discussed above assume that the ego-agent encounters a randomly sampled set of opponent policies in each episode. In contrast, cross-episode modeling assumes repeated interactions with a single, evolving opponent set.

Approaches in this latter category include opponent shaping methods~\citep{foerster_learning_2018, willi_cola_2022, pmlr-v162-lu22d}, where policy updates account for their impact on other agents' learning. Other works explicitly model the opponent's evolution: GSCU~\citep{fu_greedy_2022} maintains a belief over the opponent's policy embedding conditioned on past episodes, while LILI~\citep{lili} employs a static, episode-level embedding representing the opponent's strategy in the next episode.

Our work belongs to the former, more general category: we aim for robustness against a wide population of opponents rather than exploiting the learning process of any single opponent over time.

\noindent\textbf{Reward Prediction in Opponent Modeling.}
While prior work in opponent modeling has incorporated future reward prediction~\citep{lili, tacchetti2018relational, omis}, these approaches have not fully leveraged it for policy learning. For instance, Tacchetti et al.~\yrcite{tacchetti2018relational} employ return prediction only for offline analysis, whereas OMIS~\citep{omis} does not utilize its predictions to train the actor. Xie et al.~\yrcite{lili} predict rewards, but their opponent embeddings are modeled at the episode level, not the individual timestep level, and held constant throughout each episode. We propose conditioning the ego-agent on embeddings specifically predictive of the sum of future rewards across a horizon, and to train those embeddings we adopt a contrastive approach. While contrastive learning has been widely used in MARL~\citep{liu2023, lo2024learning, papoudakis_agent_2021}, we uniquely utilize this objective to predict the ego-agent's future rewards as a proxy for the opponent's impact on the ego-agent.

%% file: sections/03-preliminaries.tex
\section{Preliminaries}
\label{sec:preliminaries}
We consider $N$-agent Partially Observable Stochastic Games~\citep{shapley1953stochastic}, defined as tuples $\langle \mathcal{S}, \mathcal{A}, P, \{R^i\}_{i=1}^N, \gamma, \mathcal{O}, \{\Omega^i\}_{i=1}^N \rangle$.
Here, $\mathcal{S}$ is the set of environment states, and $\mathcal{A} = \mathcal{A}^1 \times \mathcal{A}^2 \times \dots \times \mathcal{A}^N$ is the joint action space of all agents, with $\mathcal{A}^i$ denoting the action space of agent $i$. 
The environment state transition function $P(s' \mid s, a^1, \dots, a^N)$ specifies the probability distribution over the next state given the current state and agent actions. 
$R^i : \mathcal{S} \times \mathcal{A} \to \mathbb{R}$ is the reward function for agent $i$, and $\gamma \in [0, 1)$ is the discount factor. 
Agents do not have direct access to environment states; instead, $\mathcal{O} = \mathcal{O}^1 \times \dots \times \mathcal{O}^N$ represents the joint observation space. $\Omega^i(o^i \mid s', a^1, \dots, a^N)$ describes the probability distribution over observations available to agent $i$ in the next timestep, given the next state and agent actions.

In line with previous work in opponent modeling, we assume control of a single agent, referred to as the \textit{ego-agent}, and refer to all other agents as \textit{opponents}. 
For ease of notation, we denote the ego-agent's action, observation, and reward with the superscript $1$ (e.g., $a^1, o^1, r^1$) and the joint action, observation, and reward of the opponents with the superscript $-1$ (e.g., $a^{-1}, o^{-1}, r^{-1}$). 
The policy of the ego-agent is denoted by $\pi^1(a^1 \mid o^1)$, and the joint policy of the opponents by $\pi^{-1}(a^{-1} \mid o^{-1}) = \prod_{j \neq 1} \pi^j(a^j \mid o^j)$. 
Following established practice~\citep{pmlr-v80-grover18a, tacchetti2018relational, papoudakis_agent_2021, meliba, yu2024}, we assume the existence of fixed sets $\Pi^{\text{train}}$ and $\Pi^{\text{test}}$ (not necessarily disjoint) containing stationary opponent policies. 
During training and testing, the ego-agent competes against policies sampled from $\Pi^{\text{train}}$ and $\Pi^{\text{test}}$, respectively. 
As such, the objective is to find the policy of the ego-agent $\pi^1$ that maximizes the expected cumulative discounted reward:
\begin{equation}
\label{eq:rl-objective}
    \operatorname*{argmax}_{\pi^1} \mathbb{E}_{\substack{\pi^{-1} \sim \mathcal{U}(\Pi^{\text{train}}) \\ s_{t+1} \sim P(\cdot \mid s_t, a^1_t, a^{-1}_t) \\ a^1_t \sim \pi^1(\cdot \mid o^1_t) \\ a^{-1}_t \sim \pi^{-1}(\cdot \mid o^{-1}_t)}} \left[ \sum_{t=0}^{\infty} \gamma^t R^1(s_t, a^1_t, a^{-1}_t) \right]
\end{equation}

Additionally, our theoretical analysis uses $\mathcal{H}(\cdot)$ to denote Shannon entropy. For random variables $X$ and $Y$, the mutual information is defined as $I(X;Y) = \mathcal{H}(X) - \mathcal{H}(X \mid Y)$, and similarly
$I(X;Y \mid Z) = \mathcal{H}(X \mid Z) - \mathcal{H}(X \mid Y, Z)$. To learn representations informative of future variables, we adopt the standard InfoNCE
contrastive objective~\citep{oord2019representationlearningcontrastivepredictive}. Given a latent embedding $c_t$ and a target variable $x_{t+k}$, the InfoNCE loss is defined as
\begin{equation}
\mathcal{L}_{\text{InfoNCE}} = - \mathbb{E} \left[\log \frac{\exp(f(x_{t+k}, c_t))} {\sum_{x_j \in \mathcal{X}} \exp(f(x_j, c_t))} \right],
\end{equation}
where $f(\cdot,\cdot)$ is a similarity function and $\mathcal{X}$ contains one positive sample and multiple negative samples. Minimizing this objective was proven to maximize the lower bound on the mutual information between $c_t$ and $x_{t+k}$~\citep{oord2019representationlearningcontrastivepredictive}.

%% file: sections/04-method.tex
\section{Method}

We propose \textsc{M}ixer of \textsc{I}ntention e\textsc{X}perts (\mixer{}), an adaptive opponent modeling architecture capable of adjusting which episode information regarding future opponent behavior is captured, depending on the task at hand. We begin by outlining the general \mixer{} framework in \Cref{sec:mix-arch}, and describe the design choices and training objectives of its submodules in \Cref{sec:encoders}.

\subsection{\mixer{} Architecture}
\label{sec:mix-arch}

\begin{wrapfigure}{r}{0.5\textwidth}
    \vspace{-10pt}
    \centering
    \includegraphics[width=\linewidth]{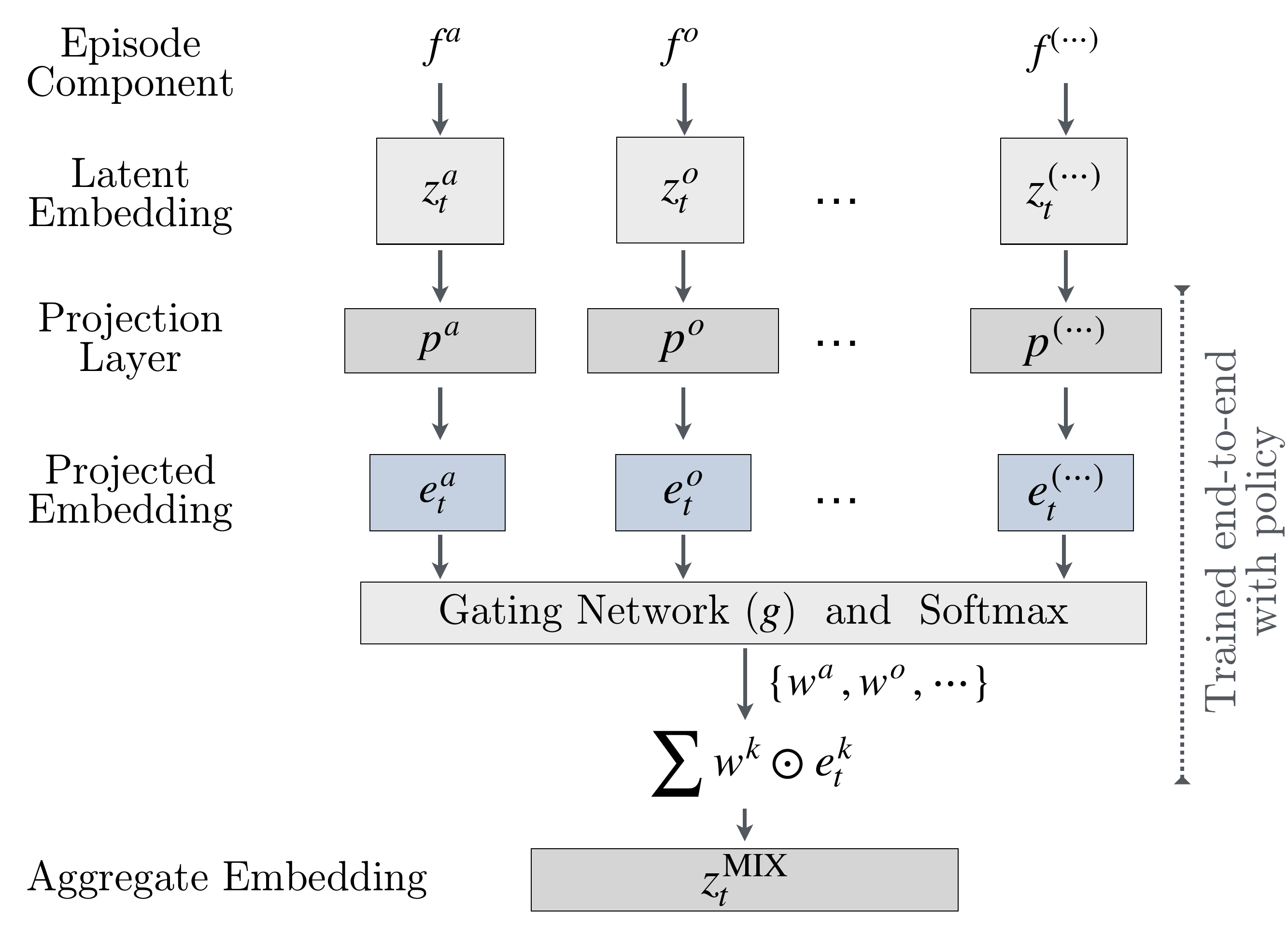}
    \caption{\small Overview of the \mixer{} architecture. Encoders $\{f^k\}_{k=1}^K$ produce latent representations that are combined via a gating network into an adaptive opponent representation $z^{\mixer}$.}
    \label{fig:mixer_arch}
    \vspace{-10pt}
\end{wrapfigure}

In line with previous work~\citep{meliba, papoudakis_agent_2021}, \mixer{} models the latent space $\mathcal{Z}$ representative of opponent behavior. At each episode timestep $t$, it produces an embedding $z^{\mixer}_t \in \mathcal{Z}$ which then conditions the policy of the ego-agent $\pi^1(\cdot \mid o_t^1, z^{\mixer}_t)$.

To derive $z^{\mixer}_t$, we first employ a set of $K$ specialized encoders $\{f^k\}_{k=1}^K$, each taking the ego-agent's observation $o_t^1$ and previous action $a_{t-1}^1$ as input to output an embedding $z^k_t$ predictive of a distinct episode component. We detail the exact implementation and training objectives of these encoders in \Cref{sec:encoders}.
To curate the information in the embeddings and prioritize those that best capture the opponent's intentions in the current environment, we propose aggregating and weighting them using a Mixture-Of-Experts (MoE)~\citep{jacobs1991} module.

While existing opponent modeling approaches typically rely on a fixed, limited subset of episode information, our use of multiple encoders allows the architecture to capture a variety of episode components informative of the opponent's future behavior. However, correctly utilizing these multiple embeddings presents a challenge. A na\"{i}ve aggregation strategy, such as simply concatenating the vectors as $[z^1_t, \dots, z^K_t]$, introduces redundant information and features that are not beneficial for policy learning. Moreover, concatenation does not scale well with the number of encoders, significantly increasing the actor input size.

The MoE module addresses these issues by dynamically weighting the input representations. It employs projection layers $\{p^k\}_{k=1}^K$ and a gating network $g$ to map the embeddings from their respective spaces to an aligned, shared $D$-dimensional latent space. This produces ``expert'' outputs $e^k_t$, each capturing a distinct feature of the opponent's behavior.

For each encoder $k \in \{1, \dots, K\}$, the projected embedding is obtained via:
\begin{equation}
    e^k_t = \text{LayerNorm}(p^k(z^k_t)), \quad e^k_t \in \mathbb{R}^D.
\end{equation}
The gating layer $g$ processes the concatenated expert projections to generate normalized, feature-wise importance weights $w^k \in \mathbb{R}^D$ for each expert $k$.
We formulate the weighting and the final aggregation as:
\begin{align}
    [w^1, \dots, w^K] &= \text{softmax}\left( g([e^1_t, \dots, e^K_t]) \right) \\
    z^{\mixer}_t &= \sum_{k=1}^K w^k \odot e^k_t,
\end{align}
where the softmax operation is applied across the expert dimension. Crucially, the MoE module is trained end-to-end with the policy. This enables the gating mechanism to act as an information bottleneck, prioritizing representations that minimize policy gradient variance while reducing irrelevant auxiliary signals. The \mixer{} architecture is depicted in \Cref{fig:mixer_arch}.

\subsection{Episode Component Encoders} \label{sec:encoders}

Fundamental to the definition of $z^{\mixer}$ are the embeddings produced by the encoders, which we detail in this section. Recent work on opponent modeling highlights three episode components useful for representing intent: opponent actions~\citep{meliba}, current opponent observations~\citep{papoudakis_agent_2021}, and future environment states~\citep{yu2024}. While these components have proven effective in specific environments, we argue that they are not universally sufficient across diverse tasks. Moreover, they overlook the ego-agent's future rewards, maximizing of which is the primary MARL objective~\eqref{eq:rl-objective}. We hypothesize that conditioning the policy on embeddings predictive of future rewards allows the agent to explicitly understand how opponent behavior impacts its own performance. Consequently, we introduce future rewards as a fourth episode component to model.

We model each of these four components using a dedicated encoder-decoder architecture. The recurrent encoders ($f^a, f^o, f^s, f^r$) take the ego-agent's local observation $o^1_t$ and previous action $a^1_{t-1}$ as input and produce latent embeddings ($z^a_t, z^o_t, z^s_t, z^r_t$) that capture information about opponent actions, opponent observations, future states, and future rewards, respectively. For each encoder, we implement a corresponding decoder ($h^a, h^o, h^s, h^r$) and train each pair jointly using a dedicated objective. The ground truth targets for these decoders rely on privileged global information or future trajectory data extracted from training rollouts.
The decoders are discarded at test-time, a design that aligns with the Centralized Training Decentralized Execution (CTDE) paradigm~\citep{Oliehoek2016}.

\noindent\textbf{Action, Observation, and Future State Prediction.}
We train the encoder-decoder pairs ($f^k, h^k$) for actions, observations, and future states via MSE loss. For $k \in \{a, o, s\}$:
\begin{equation}
    \mathcal{L}_k = \mathbb{E} \left[ \| h^k(f^k(o^1_{t}, a^1_{t-1})) - y^k \|^2 \right]
\end{equation}
where the target $y^k$ is the opponent's current action $a^{-1}_t$, the opponent's current observation $o^{-1}_t$, and the future joint state $s_{t+n} \approx [o^1_{t+n}, o^{-1}_{t+n}]$ (where $n$ is a hyperparameter), respectively.

\noindent\textbf{Future Rewards Prediction.}
In addition to the episode components modeled by prior work, we also train an encoder to model embeddings predictive of future rewards. Since the relationship between local observations $o^1_t$ and future rewards is inherently more complex than predicting immediate opponent features (e.g., opponent observations $o^{-1}_t$), we use the following contrastive InfoNCE objective to maximize the mutual information between future returns over a horizon $H$ and the embedding $f^r(o^1_{t}, a^1_{t-1})$:
\begin{equation}
    \mathcal{L}_{\text{InfoNCE}} = - \mathbb{E} \left[ \log \frac{\exp \left( h^r(f^r(o^1_{t}, a^1_{t-1})) \cdot p^r(R^{H}_t) / T \right)}{ \sum_{j \in \mathcal{B}} \exp \left( h^r(f^r(o^1_{t}, a^1_{t-1})) \cdot p^r(R^{H}_j) / T \right) } \right]
\end{equation}
where $R^{H}_t = \sum_{k=1}^H r^1_{t+k}$ represents the cumulative future return, $\mathcal{B}$ denotes the training batch, $T$ is a temperature hyperparameter, and $p^r$ is a layer that projects the scalar return into the latent dimension.

%% file: sections/05-analysis.tex
\section{Theoretical Analysis}
\label{sec:theoretical-analysis}
In this section, we provide formal justification for learning reward-predictive intention embeddings by maximizing mutual information with future returns. Specifically, we show that maximizing mutual information with future rewards minimizes an upper bound on value uncertainty.

\begin{proposition}
\label{prop:uncertainty}
Let $G_t = \sum_{k=0}^{\infty} \gamma^k r^1_{t+k+1}$ denote the infinite-horizon discounted return and $z_t^r$ be an embedding encoding intent with respect to future rewards. Also define the sequence of future rewards up to horizon $H$ as $r^1_{t+1:t+H} = (r^1_{t+1}, \dots, r^1_{t+H})$. Then,
\[
\mathcal{H}(G_t \mid o_t^1, z_t^r) \le \mathcal{H}(r^1_{t+1:t+H}) - I(z_t^r;r^1_{t+1:t+H}) + C,
\]
where $C$ is independent of $z_t^r$ and represents the residual tail uncertainty.
\end{proposition}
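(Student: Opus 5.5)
Write $R_{1:H} := r^1_{t+1:t+H}$ and split the return into a head and a tail: $G_t = A_t + \gamma^H G_{t+H}$ with $A_t = \sum_{k=0}^{H-1}\gamma^k r^1_{t+k+1}$. Here $A_t$ is a deterministic function of $R_{1:H}$. The plan has three moves: condition $G_t$ additionally on the reward window, drop the conditioning on $o^1_t$ in the window term, and then rewrite that term as an entropy minus a mutual information. We treat rewards as discrete (or work with differential entropies throughout), so that $\mathcal{H}(f(X)\mid\cdot)\le \mathcal{H}(X\mid\cdot)$ for deterministic $f$.

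\textbf{Step 1 (chain rule).} By the chain rule,
\[
\mathcal{H}(G_t \mid o^1_t, z^r_t) \le \mathcal{H}(G_t, R_{1:H} \mid o^1_t, z^r_t) = \mathcal{H}(R_{1:H}\mid o^1_t, z^r_t) + \mathcal{H}(G_t \mid R_{1:H}, o^1_t, z^r_t).
\]
Given $R_{1:H}$, the head $A_t$ is fixed, so the second term equals $\mathcal{H}(\gamma^H G_{t+H}\mid R_{1:H}, o^1_t, z^r_t)$.

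\textbf{Step 2 (window term).} Since conditioning reduces entropy, $\mathcal{H}(R_{1:H}\mid o^1_t,z^r_t)\le \mathcal{H}(R_{1:H}\mid z^r_t)$. By the definition of mutual information in \Cref{sec:preliminaries}, this equals $\mathcal{H}(R_{1:H}) - I(z^r_t;R_{1:H})$, which gives the first two terms of the claimed bound.

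\textbf{Step 3 (tail term).} This is where care is needed. We need $C \ge \mathcal{H}(\gamma^H G_{t+H}\mid R_{1:H},o^1_t,z^r_t)$ with $C$ not depending on the encoder. The plan is to use conditioning once more and take
\[
C := \mathcal{H}(\gamma^H G_{t+H}\mid R_{1:H}, o^1_t)\;\ge\;\mathcal{H}(\gamma^H G_{t+H}\mid R_{1:H}, o^1_t, z^r_t).
\]
Dropping $o^1_t$ as well would be an equally valid choice. This $C$ is a functional of the joint law of observations and rewards only, so it does not depend on $z^r_t$.

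One subtlety should be made explicit. The ego policy $\pi^1$ may itself depend on $z^r_t$, and then the trajectory distribution would change with the encoder. We handle this by fixing the data-generating policy, i.e.\ analysing the encoder for a given rollout distribution. For continuous returns, the scaling $\gamma^H$ only contributes an additive $H\log\gamma$ in differential entropy, which we absorb into $C$. That constant also makes the ``residual tail uncertainty'' interpretation explicit: it is the entropy of the discounted tail, which vanishes in relevance as $\gamma^H\to 0$.

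Combining Steps 1--3 yields the inequality in \Cref{prop:uncertainty}. Finally, we note that by the InfoNCE bound recalled in \Cref{sec:preliminaries}, minimizing $\mathcal{L}_{\text{InfoNCE}}$ increases $I(z^r_t;\cdot)$ and so tightens the bound. Strictly, InfoNCE targets the sum $R^H_t$ rather than the full window. Since $R^H_t$ is a function of $R_{1:H}$, the data-processing inequality gives $I(z^r_t;R_{1:H})\ge I(z^r_t;R^H_t)$, so the bound still holds with $I(z^r_t;R^H_t)$ in place of $I(z^r_t;R_{1:H})$.
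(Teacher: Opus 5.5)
Your proof is correct for Shannon entropy, the setting the paper fixes in its preliminaries. It takes a slightly different route from the paper's and also repairs a weak point in the paper's own argument.

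The paper applies sub-additivity directly to the head/tail split, $\mathcal{H}(G_t\mid o^1_t,z^r_t)\le\mathcal{H}(G^H_t\mid o^1_t,z^r_t)+\mathcal{H}(\gamma^H G_{t+H}\mid o^1_t,z^r_t)$. It bounds the head by $\mathcal{H}(r^1_{t+1:t+H}\mid o^1_t,z^r_t)$, since $G^H_t$ is a function of the window, and finishes exactly as in your Step 2. It then simply names $C:=\mathcal{H}(\gamma^H G_{t+H}\mid o^1_t,z^r_t)$.

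You instead run the chain rule through the window itself. This absorbs the function-of-a-variable step into $\mathcal{H}(G_t\mid\cdot)\le\mathcal{H}(G_t,r^1_{t+1:t+H}\mid\cdot)$. It also leaves a tail term additionally conditioned on $r^1_{t+1:t+H}$, which is never larger than the paper's.

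The real gain is in Step 3. The paper's $C$ still conditions on $z^r_t$, so it generally changes with the encoder, and the asserted independence is not actually shown. Establishing it would need the extra step $\mathcal{H}(\gamma^H G_{t+H}\mid o^1_t,z^r_t)\le\mathcal{H}(\gamma^H G_{t+H}\mid o^1_t)$. Your additional conditioning step makes $C$ genuinely encoder-independent. So does your explicit fixing of the rollout distribution, which the paper leaves implicit even though $\pi^1$ is conditioned on the embedding; otherwise $\mathcal{H}(r^1_{t+1:t+H})$ would also move with the encoder.

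One correction: drop the parenthetical about using differential entropies throughout. Both $\mathcal{H}(f(X)\mid\cdot)\le\mathcal{H}(X\mid\cdot)$ and the monotonicity used in Step 1 fail for continuous variables; for example, scaling a continuous variable by $2$ raises its differential entropy by $\log 2$. Absorbing $H\log\gamma$ into $C$ does not repair this.

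What does work is the mixed case of discrete rewards with a continuous return. Step 1 then follows from $I(G_t;r^1_{t+1:t+H}\mid o^1_t,z^r_t)\le\mathcal{H}(r^1_{t+1:t+H}\mid o^1_t,z^r_t)$. Conditional on the window, $G_t$ is a translate of $\gamma^H G_{t+H}$, which leaves differential entropy unchanged. That is the reading under which your $\gamma^H$ remark makes sense.

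The mixed reading also handles the fact that an infinite discounted sum of discrete rewards need not itself be discrete. In the purely discrete reading, $C$ can then be infinite and the bound vacuous, a caveat the paper shares.
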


\begin{proof}
We decompose the return into a truncated component and a tail: $G_t = G_t^H + \gamma^H G_{t+H}$. By the sub-additivity of conditional entropy, 
\[\mathcal{H}(G_t \mid o_t^1, z_t^r) \le \mathcal{H}(G_t^H \mid o_t^1, z_t^r) + \mathcal{H}(\gamma^H G_{t+H} \mid o_t^1, z_t^r).\]
Let $C$ denote the second term. For the first term, we observe that the truncated return $G_t^H$ is a deterministic function of the reward sequence $r^1_{t+1:t+H}$. Since the entropy of a function of a random variable is upper-bounded by the entropy of the variable itself, 
\[\mathcal{H}(G_t^H \mid o_t^1, z_t^r) \le \mathcal{H}(r^1_{t+1:t+H} \mid o_t^1, z_t^r).\]
Furthermore, since conditioning reduces entropy, we have $\mathcal{H}(r^1_{t+1:t+H} \mid o_t^1, z_t^r) \le \mathcal{H}(r^1_{t+1:t+H} \mid z_t^r)$. Expanding this using the definition of mutual information yields:
\[\mathcal{H}(r^1_{t+1:t+H} \mid z_t^r) = \mathcal{H}(r^1_{t+1:t+H}) - I(z_t^r; r^1_{t+1:t+H}).\]
Combining these inequalities completes the proof.
\end{proof}
As a consequence of Proposition~\ref{prop:uncertainty}, maximizing mutual information between $z_t^r$ and $r^1_{t+1:t+H}$ reduces an upper bound on value uncertainty. Since the truncated return $R_t^H$ optimized in our practical objective (\Cref{sec:encoders}) is a deterministic function of the future reward sequence $r^1_{t+1:t+H}$, and $V^{\pi^1}(o_t^1, z_t^r)=\mathbb{E}[G_t\mid o_t^1, z_t^r]$, encouraging $z_t^r$ to be informative about $R_t^H$ directly supports stable value estimation.

%% file: sections/06-eval.tex
\section{Evaluations}
\label{sec:eval}

We compare \mixer{} against three state-of-the-art opponent modeling methods and a baseline with no explicit modeling
in the following multi-agent environments (details in Appendix~\ref{sec:environment-details}):
\begin{itemize}[leftmargin=*,topsep=0ex,itemsep=0ex,parsep=0ex]
    \item \textsc{Kuhn Poker}~\citep{Kuhn1950}: Simplified, partially observable two-player poker. Each episode lasts for $100$ steps ($33$--$50$ hands).
    \item \textsc{Partially Observable Predator-Prey (POPP)}: Modified version of Lowe et al.~\yrcite{lowe2017}, featuring partial observability for the ego-agent and a more complex reward structure~\citep{bohmer2020}. This modified version is commonly used in opponent modeling~\citep{fu_greedy_2022, papoudakis_agent_2021}.
    \item \textsc{Level-Based Foraging (LBF)}~\citep{christianos2020shared, papoudakis2021benchmarking}: Mixed-motive, fully observable environment involving two agents collecting food in a grid-world. Agents must collaborate to collect food of a higher level than their own.
    \item \textsc{Google Research Football (GRF)}~\citep{kurach2020googleresearchfootballnovel}: A modified ``Run to score with keeper'' scenario where the ego-agent faces six opponents: the keeper, two defenders between the agent and the goal, and three trailing defenders. Fully observable.
\end{itemize}%

\noindent\textbf{Baselines. }
We use the following methods that each model opponent intent in a different way:
\begin{itemize}[leftmargin=*,topsep=0ex,itemsep=0ex,parsep=0ex]
    \item \textsc{LIAM}~\citep{papoudakis_agent_2021}: Uses an encoder-decoder architecture to model the next opponent action and current observation $(a_t^{-1}, o_t^{-1})$.
    \item \textsc{MeLIBA}~\citep{meliba}: Models two embeddings per opponent using a hierarchical VAE architecture, predictive of the opponents' actions $a_{t}^{-1}$.
    \item \textsc{OMG}~\citep{yu2024}: Represents opponent intent as a future state of the environment $s_{t+k}$ modeled using a Conditional VAE.
    \item \textsc{No Opponent Modeling (NOM)}: No explicit opponent modeling, equivalent to PPO.
    
\end{itemize}

We adapt MeLIBA and OMG to the partial observability of Kuhn Poker and POPP by restricting their inputs to local observations (details in Appendix~\ref{sec:implementation-details}).

\noindent\textbf{Evaluation Protocol.}
For all algorithms, we use PPO~\citep{schulman2017proximalpolicyoptimizationalgorithms} as the backbone for policy learning. Except for GRF, we evaluate performance in two settings: \textit{seen}, where algorithms are evaluated against the set of opponents encountered during training ($\Pi^{\text{train}}=\Pi^{\text{test}}$); and \textit{unseen}, testing generalization to novel opponents ($\Pi^{\text{train}} \cap \Pi^{\text{test}} = \emptyset$). Reported results are averaged over multiple random seeds, with shaded areas denoting one standard deviation.

In Kuhn Poker, opponents follow coded heuristics. For POPP and LBF, we pretrain opponents using MAPPO~\citep{yu2022} to produce the train and test sets. Similar to Jing et al.~\yrcite{omis}, we introduce an auxiliary diversity objective that discourages homogeneity in policies within these sets, and thus produces more challenging opponents. We evaluate three opponent settings: no explicit diversity, low, and high diversity (details in Appendix~\ref{sec:opponent-policies}). Finally, in GRF, we evaluate exclusively in the \textit{seen} setting against the built-in AI at default difficulty.

\subsection{Performance Against Baselines}
\label{sec:perf-against-baselines}

\begin{wrapfigure}{r}{0.5\textwidth}
    \vspace{-15pt}
    \centering
    \includegraphics[width=\linewidth]{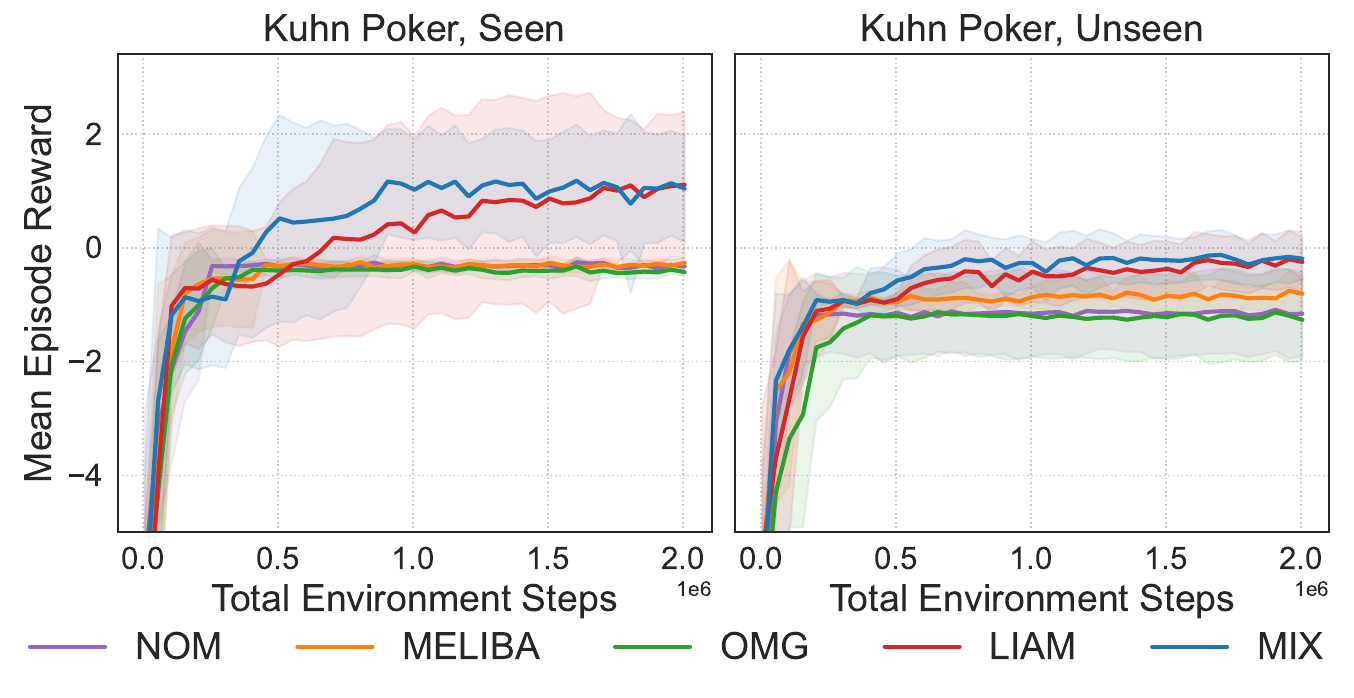}
    \caption{\small Kuhn Poker performance against \textit{seen} (left) and \textit{unseen} (right) opponents, averaged over 10 seeds.}
    \label{fig:kp_results}
    \vspace{-10pt}
\end{wrapfigure}
\textbf{Kuhn Poker:}
\Cref{fig:kp_results} shows evaluations against seen (left) and unseen (right) opponents, averaged over $10$ random seeds.
In both settings, \mixer{} and LIAM outperform the rest of the baselines. Given that the environment features very short-term dynamics (with a single hand lasting $2$--$3$ timesteps), LIAM's strong performance is expected, as it prioritizes immediate episode components such as next opponent actions and current observations. \mixer{} is able to effectively adapt to focus on these short-term qualities.

\noindent\textbf{POPP and LBF:}
\Cref{fig:performance-comparison}(left) shows algorithm performance against opponents seen during training. We see that \mixer{} broadly outperforms all baselines, most significantly in POPP, and is matched only by OMG in the no- and low-diversity settings of LBF. Conversely, OMG achieves relatively poor performance in POPP, failing to match even the NOM baseline in the no-diversity setting; we attribute this to its reliance on future global state reconstruction, which degrades under partial observability. While LIAM performed well in Kuhn Poker, it seems to struggle in environments with a more complex reward structure. We note that increased opponent diversity generally makes the performance gap between algorithms more pronounced, highlighting the importance of accurate opponent modeling in settings with heterogeneous agent strategies.

The results with unseen opponents, shown in \Cref{fig:performance-comparison}(right), exhibit a similar pattern, albeit with higher noise. As expected, we observe a general decrease in performance across all algorithms due to the distribution shift between training and evaluation policies. Once again, \mixer{} performs noticeably better than the baselines, outperformed only by OMG in no-diversity LBF. Notably, it is the only algorithm whose performance does not catastrophically degrade in the POPP environment setting with high opponent diversity.

\begin{figure*}[tb]
    \centering
    \includegraphics[width=0.95\textwidth]{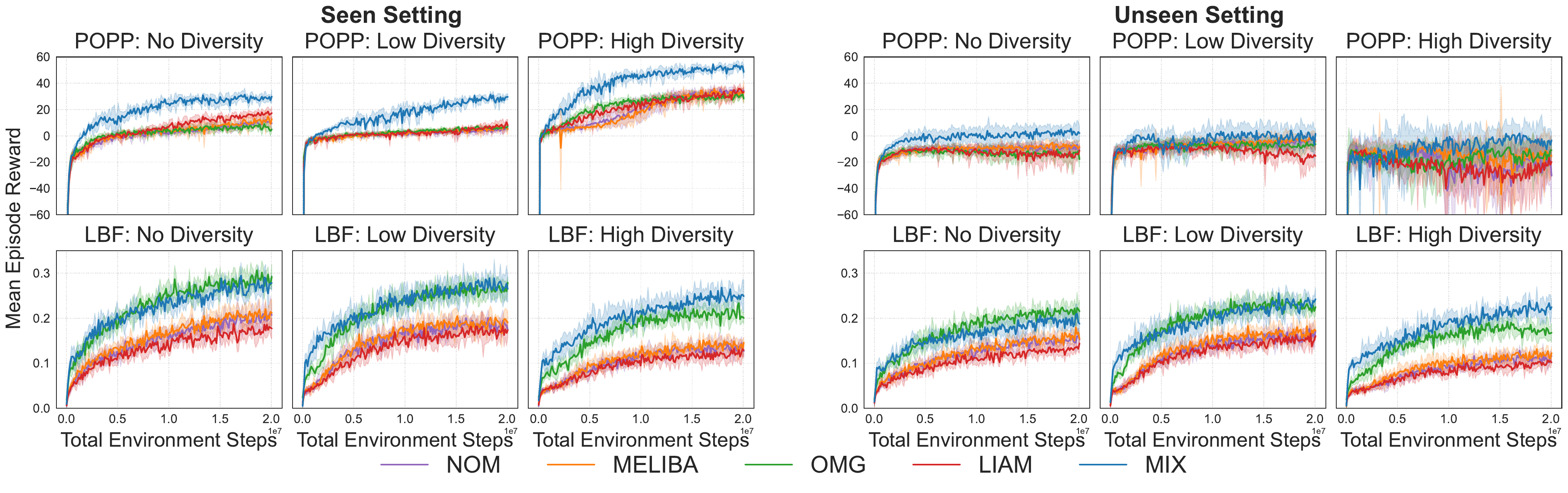}
    \caption{
    Comparisons against baseline methods in six environment configurations for \textit{seen} (left) and \textit{unseen} (right) opponent settings. \mixer{} broadly matches or outperforms all baselines, and the improvement becomes most prominent for highly diverse opponent sets. Results averaged over $10$ seeds for unseen POPP, and $5$ for all others.
    }
    \label{fig:performance-comparison}
\end{figure*}

\begin{figure*}[tb]
    \centering
    \includegraphics[width=1.0\textwidth]{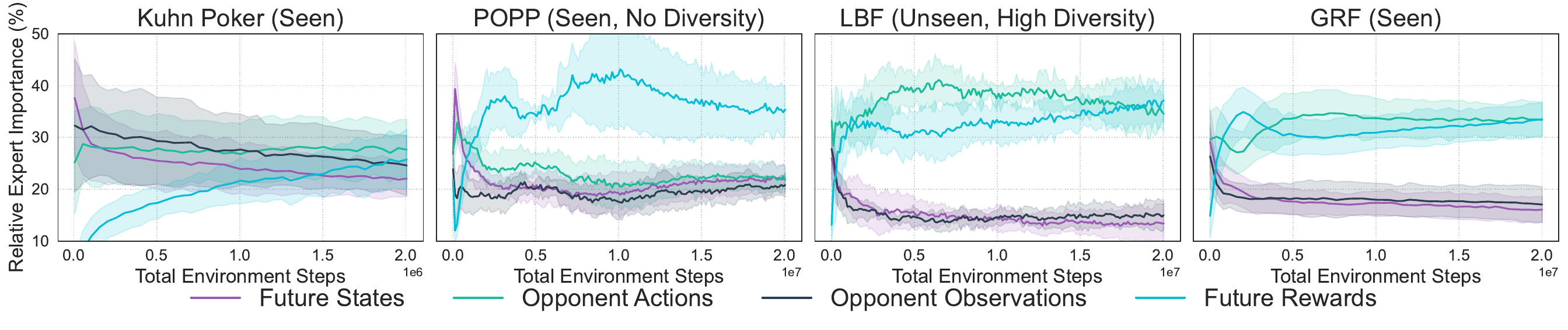}
    \caption{
    Relative expert importance, calculated as the magnitude of Gradient$\times$Input attribution expressed as a percentage of the total. Expert influence dynamically adapts over the course of training and varies across environments. Results averaged over multiple random seeds (KP: $10$, POPP/LBF: $5$, GRF: $20$).
    }
    \label{fig:intention_importance}
\end{figure*}

\noindent\textbf{Scaling to GRF:}
In \Cref{fig:grf}, we evaluate \mixer{} on GRF, an environment with more complex dynamics and six opponents, as well as substantially larger observation and action spaces (115 and 19 dimensions, respectively). We see that \mixer{} noticeably outperforms all baselines, substantially improving episode rewards and sample efficiency. This evidences effective scaling performance and goes beyond simple environments typically tested in prior research.

\begin{wrapfigure}{r}{0.5\textwidth}
    \vspace{-12pt}
    \centering
    \includegraphics[width=\linewidth]{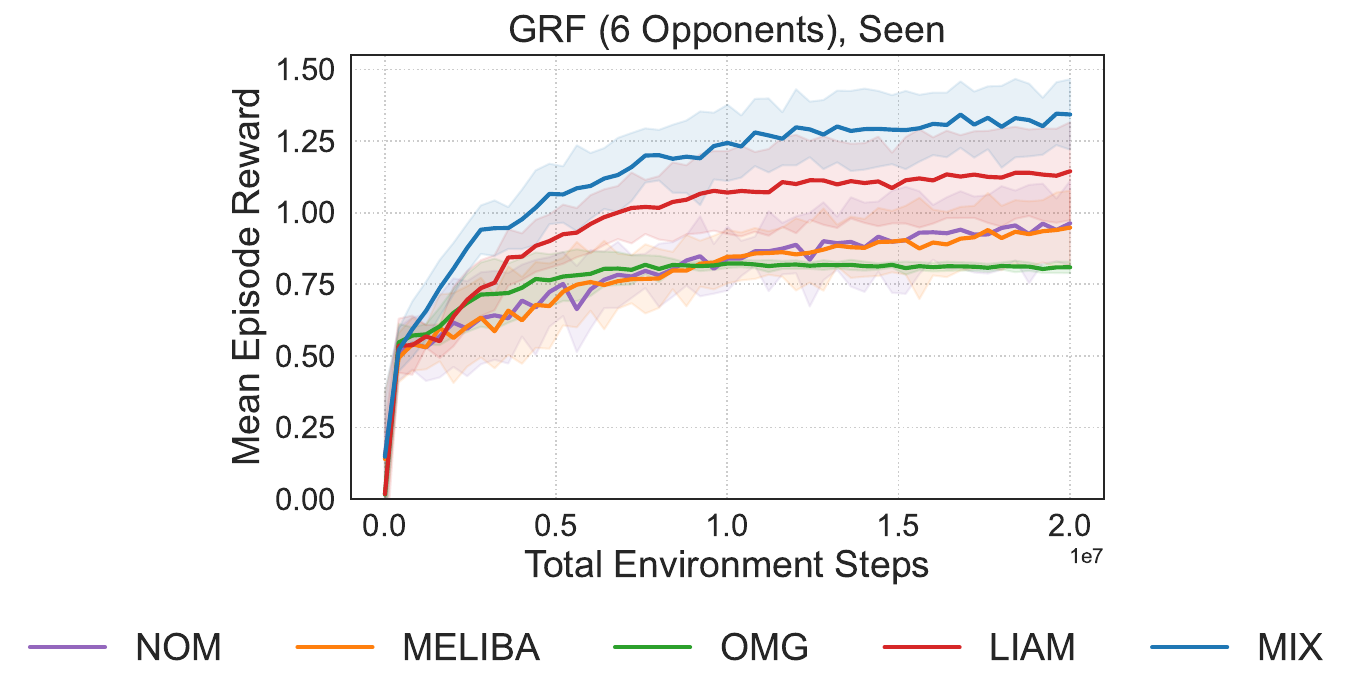}
    \caption{\small GRF performance against 6 built-in AI opponents. Results averaged over 20 random seeds.}
    \label{fig:grf}
    \vspace{-12pt}
\end{wrapfigure}

Curiously, we also observe that in spite of full observability of this environment, OMG is not as competitive as it was in LBF (also fully observable). This suggests that the algorithm's approach does not scale as effectively to environments with more complex, higher-dimensional dynamics.

\subsection{Evaluation of \mixer{} Embeddings}

To analyze the relative importance of each encoder in \mixer{} to the final embedding, we apply the first-order Gradient$\times$Input attribution method~\citep{shrikumar2017justblackboxlearning}. \Cref{fig:intention_importance} shows that all experts actively contribute, generally accounting for between $10\%$ and $50\%$ of the total magnitude. While attribution patterns are similar for some settings (e.g., LBF and GRF), they differ drastically for others (e.g., LBF vs. Kuhn Poker). In three of the four environments, our Future Rewards expert consistently ranks among the top two most important. In Kuhn Poker, since the objective is to infer the opponent's hidden card, importance shifts toward the Opponent Observations and Future States experts. 
This variability in contributions highlights how \mixer{} adapts to the needs of different tasks. Appendix~\ref{sec:individual-expert-analysis} further shows that policies conditioned using \mixer{} consistently match or outperform those conditioned on any of its individual experts. Collectively, these analyses validate our central hypothesis: that relying on any single episode component across different environments is suboptimal, necessitating an adaptive approach.

\subsection{Ablations}

\begin{wrapfigure}{r}{0.5\textwidth}
    \vspace{-6pt}
    \centering
    \includegraphics[width=\linewidth]{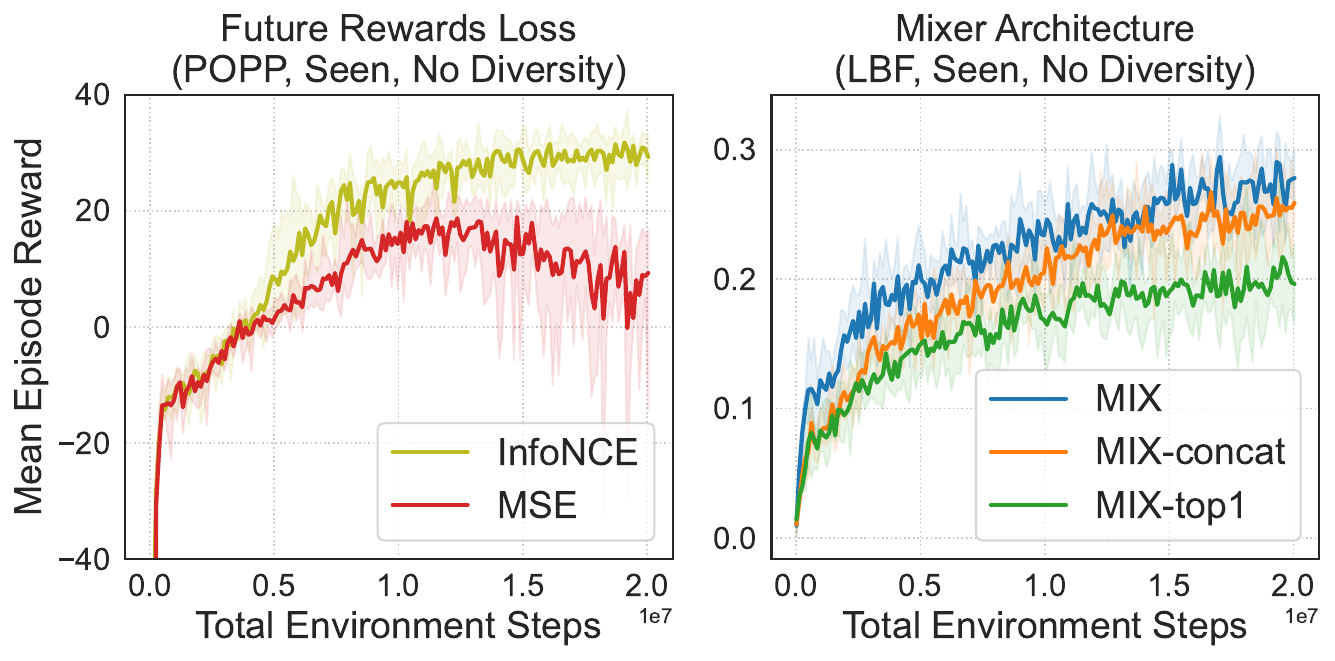}
    \caption{
      Ablation of Future Rewards modeling objective (POPP, left) and \mixer{} architecture (LBF, right) in the seen setting with no explicit opponent diversity. Results averaged over $5$ random seeds.
    }
    \label{fig:ablations}
    \vspace{-12pt}
\end{wrapfigure}
We present additional ablation studies to establish two key architectural choices we employ in \mixer{}.
First, in \Cref{sec:encoders}, we proposed the InfoNCE objective over a simpler MSE loss to predict future rewards, hypothesizing that a contrastive objective is superior in capturing the intent encoded within rewards.
To validate this choice, we train a baseline future rewards encoder-decoder using an MSE loss (all other settings unchanged).
\Cref{fig:ablations}(left) compares the performance of the backbone PPO policy when conditioned on embeddings from that encoder-decoder versus embeddings from our InfoNCE-trained Future Rewards encoder-decoder in a representative POPP environment.
We observe that the MSE objective achieves poorer performance overall, and additionally exhibits greater instability, affirming our choice of a more informative objective.

Second, we wish to establish that \mixer{} outperforms other baselines not merely due to the availability of multiple encoder embeddings, but rather through its dynamic information aggregation mechanism.
To do so, we train two alternative versions of the architecture:
\textsc{MIX-Concat}, which removes the gating mechanism entirely and instead na\"{i}vely concatenates all available embeddings, and
\textsc{MIX-Top1}, which replaces our feature-wise gating with a Top-K mechanism~\citep{shazeer2017} with $K=1$, defined as:
\begin{equation}
    [w^1, \dots, w^K] = \text{softmax}(\text{TopK}([e^1_t, \dots, e^K_t] \cdot W_g)),
\end{equation}
where $W_g$ are gating weights. For $K=1$, the TopK operator masks all logits other than the largest one with $-\infty$.
This constraint forces the network to adaptively select a single embedding at each timestep rather than learning a soft mixture.
\Cref{fig:ablations}(right) shows the performance of these two against \mixer{} on a sample LBF environment, where we observe a clear advantage of \mixer{}'s soft MoE method.

%% file: sections/07-conclusion.tex
\section{Conclusion and Limitations}
\label{sec:conclusion}

\noindent\textbf{Conclusion.} In this work, we empirically demonstrate that the performance of baselines relying on a fixed representation of opponent intent depends heavily on the environment, highlighting that such an approach is suboptimal as the effectiveness of specific intention representations is highly task-dependent. We show that performance can instead be significantly improved by dynamically adapting the modeled episode information to the environment at hand. To this end, we introduce \mixer{}, a framework capable of this dynamic adaptation based on the current setting. As a key component, we also introduce an encoder trained to model the ego-agent's future rewards via mutual information maximization. We validate our architecture across several diverse multi-agent environments, demonstrating superior results and generalization to novel opponents not encountered during training.

\noindent\textbf{Limitations and Future Work.} One limitation of our work is our assumption that the ego-agent faces a randomly sampled set of opponent policies per interaction. While this was a conscious decision to prioritize robustness against a wide population of opponents, in future work we aim to apply our adaptive representations to cross-episode settings where opponents actively evolve between interactions. Furthermore, our framework assumes a single ego-agent maintaining an opponent model over all other agents jointly. Extending this to team-based settings, where multiple cooperative agents each maintain and potentially share adaptive representations of the opposing team, introduces new challenges around representation alignment that will be explored in future work.

%% file: sections/appendices.tex
\section{Environment Implementation Details} \label{sec:environment-details}
All environments use discrete action spaces.
For Kuhn Poker, we train for $2$ million steps, evaluating every $50$ thousand steps over $10,000$ episodes.
For POPP, LBF, and GRF, we train for $20$ million environment steps, evaluating over $200$ episodes every $120$ thousand steps (POPP and LBF) or $400$ thousand steps (GRF).

\subsection{Kuhn Poker}

We adapt the open-source Kuhn Poker implementation of OpenSpiel\footnote{\url{https://github.com/google-deepmind/open_spiel}, Apache-2.0 License} \citep{LanctotEtAl2019OpenSpiel}.

Kuhn Poker features two players, $P1$ and $P2$. Each hand begins with the players receiving a private card from a set of three possible cards. The players take turns sequentially to either check/fold or bet/call in a single betting round, which results in five possible action sequences per hand:
\begin{itemize}
    \item Check, Check
    \item Check, Bet, Fold
    \item Check, Bet, Call
    \item Bet, Fold
    \item Bet, Call
\end{itemize}

At the end of a hand, the winner receives a reward of +2 if both players have chosen to bet (or call), and +1 otherwise. If a player folds, the non-folding player wins; otherwise, the player with the higher card wins. The other player receives the corresponding negative reward. In our setting, the ego-agent is always player $P1$, and the opponent $P2$.

The observation space in the original implementation consists of the player index (one-hot), the private card (one-hot), and the hand history (with each turn one-hot encoded).

To allow for learning opponent strategies over time, we modify the original implementation to play multiple hands in one episode, with a single episode lasting 100 steps (between 33 and 50 hands). We also modify the observation space such that when a new hand begins, the agents can observe their opponent's card from the previous hand.

\subsection{Partially Observable Predator-Prey}
We adopt the specific environment configuration provided by the open-source LIAM implementation\footnote{\url{https://github.com/uoe-agents/LIAM}, MIT License} \citep{papoudakis_agent_2021}.

The environment comprises one prey agent, three predator agents, and two randomly placed large obstacles. The prey operates under partial observability; it can only observe the relative positions of predators and obstacles if they are within a specific range ($0.5$). The predators maintain full observability of the prey's relative position and velocity, as well as the positions of other predators.

The prey's reward structure is defined as follows:
\begin{itemize}
    \item Positive reward (+1): If the prey collides with \textit{exactly one} predator in a timestep.
    \item Negative reward (-1): If the prey collides with \textit{multiple} predators in a timestep.
\end{itemize}
The prey is also penalized for going out of bounds. Similarly, the predators receive a positive reward only if \textit{multiple} predators collide with the prey simultaneously; otherwise, they receive a negative reward. Episodes last for 100 steps. A rendering of the environment is depicted in \Cref{fig:popp}.

\begin{figure}[htbp]
  \centering
  \begin{subfigure}[t]{0.4\textwidth}
    \centering
    \includegraphics[width=\linewidth]{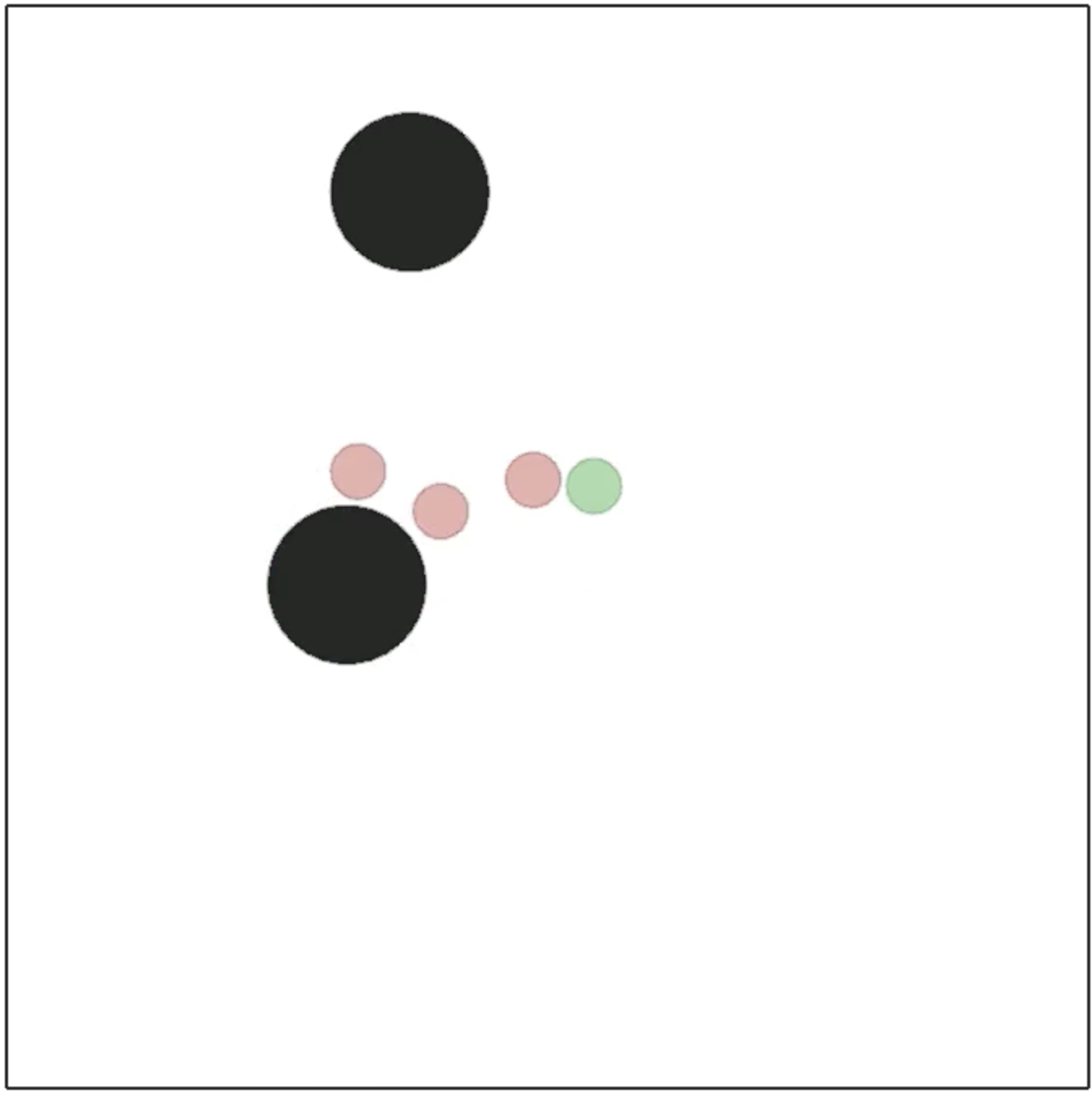}
    \caption{
      Rendering of the Partially Observable Predator-Prey environment. The prey agent (green) aims to collide with exactly one predator (red), but must learn to avoid being captured by multiple predators at once. The black circles represent obstacles.
    }
    \label{fig:popp}
  \end{subfigure}
  \hfill
  \begin{subfigure}[t]{0.4\textwidth}
    \centering
    \includegraphics[width=\linewidth]{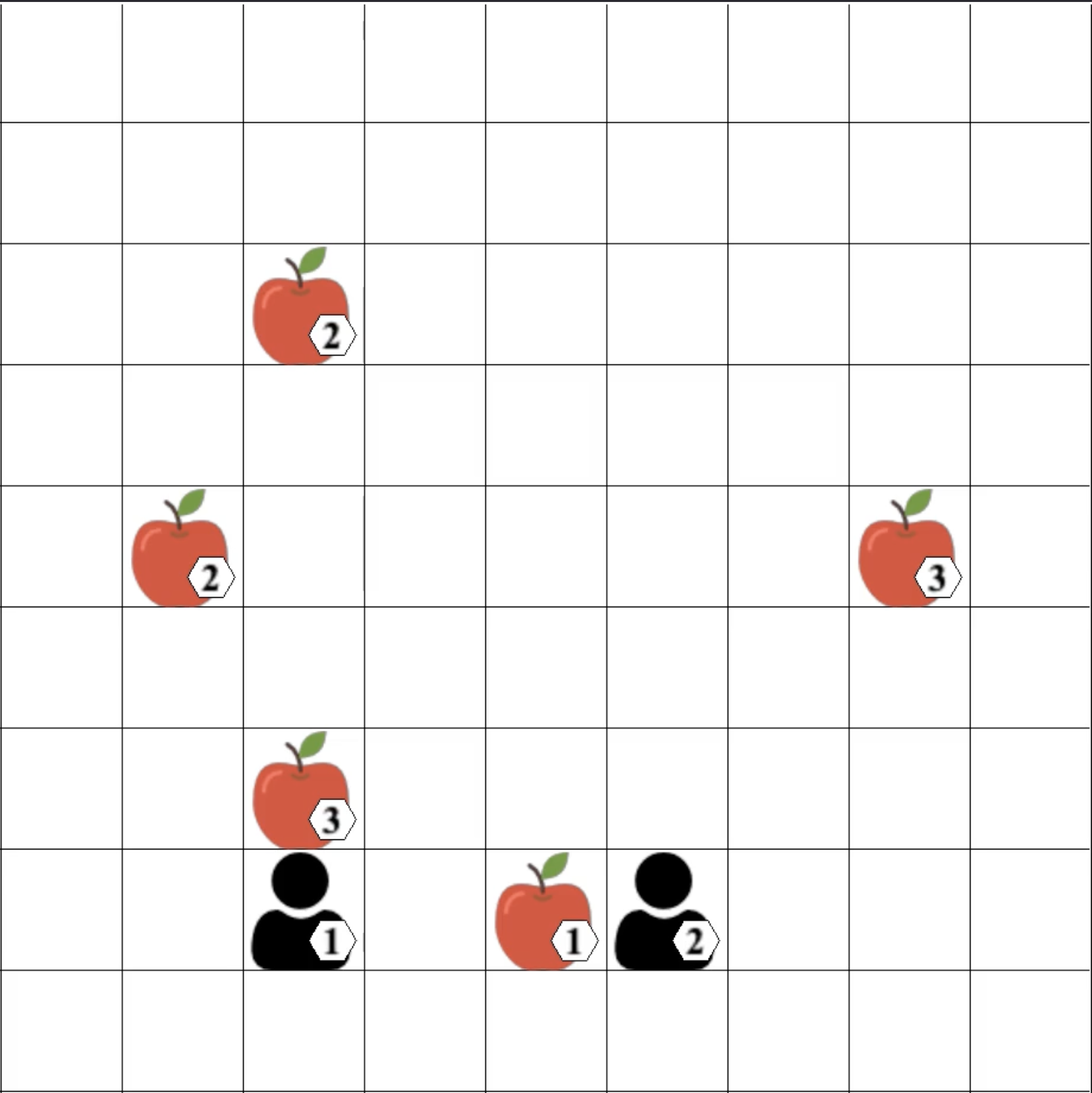}
    \caption{
      Rendering of the Level-Based Foraging environment. Two agents compete to collect food items (apples) but must collaborate to collect food with a level higher than their own. 
    }
    \label{fig:lbf}
  \end{subfigure}
  
  \caption{Renderings of the POPP and LBF environments.}
  \label{fig:both_envs}
\end{figure}

\subsection{Level-Based Foraging}

We use the open-source Level-Based Foraging environment implementation\footnote{\url{https://github.com/semitable/lb-foraging}, MIT License} \citep{christianos2020shared, papoudakis2021benchmarking}.

The environment features two players in a $9\times9$ grid, competing to collect 5 randomly placed food items, with full observability. Both agents and food items are assigned a random level; agents can collect food of a lower level individually but must collaborate to collect food of a higher level than their own. Episodes last for a maximum of 50 timesteps, terminating early if all food is collected. The reward for successfully collecting a food item is calculated as $\frac{\text{food level}}{\text{sum of all food levels}}$, split equally among the agents participating in the collection.

A rendering of the environment is presented in \Cref{fig:lbf}.

\subsection{Google Research Football}

We use the open-source GRF implementation\footnote{\url{https://github.com/google-research/football}, Apache-2.0 License} \citep{kurach2020googleresearchfootballnovel}.

We evaluate on a slightly modified version of the ``Run to Score with Keeper'' scenario. In the original, the ego-agent starts with the ball and aims to score against a keeper while being chased by five defenders from behind. To increase complexity for the ego-agent, we move two of those defenders to start between the ego-agent and the goal; this is the only structural modification. We utilize both checkpoint and scoring rewards: the ego-agent receives a $\pm 1$ reward for scoring or being scored against, and an additional $0.1$ reward each time the ball is first possessed in one of $10$ designated sections on the opponent's side of the field. This yields a maximum total reward of $2.0$ per episode. The episode lasts for $200$ steps, terminating early if a goal is scored or if the ego-agent loses possession of the ball. A rendering of the original ``Run to Score with Keeper'' scenario and our modified version is depicted in \Cref{fig:grf_render}.

\begin{figure}[htbp]
  \centering
  \begin{subfigure}[t]{0.49\textwidth}
    \centering
    \includegraphics[width=\linewidth]{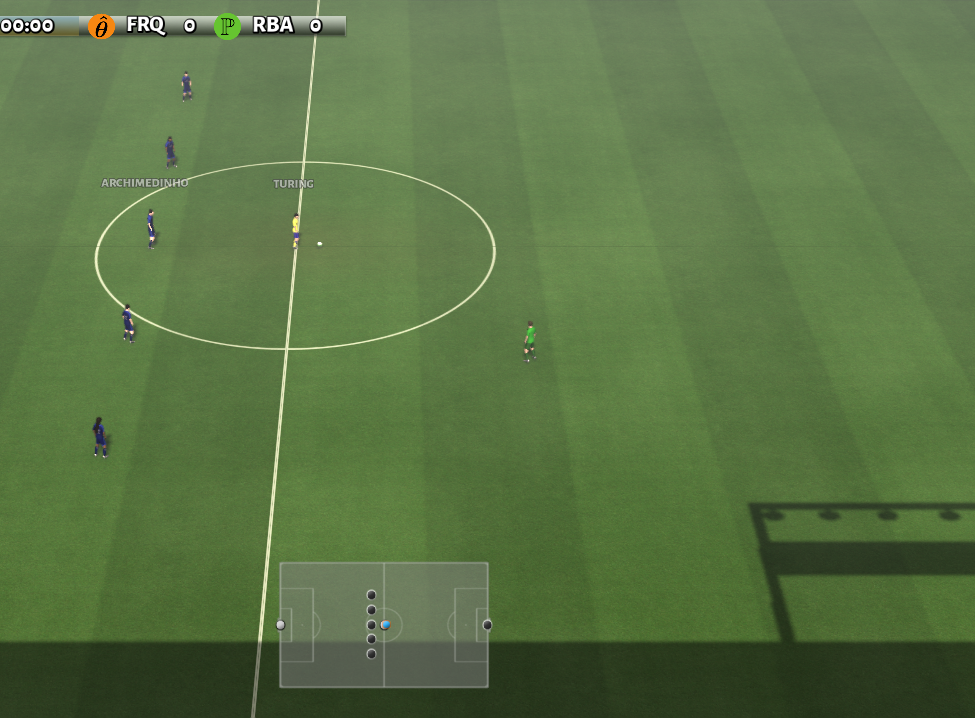}
  \end{subfigure}
  \hfill
  \begin{subfigure}[t]{0.49\textwidth}
    \centering
    \includegraphics[width=\linewidth]{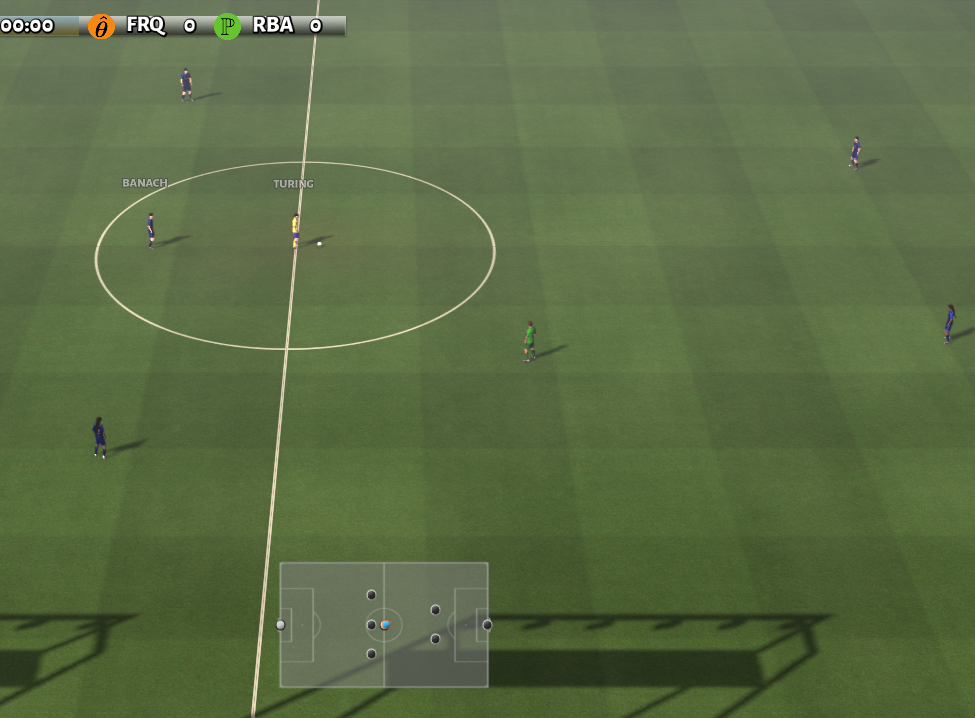}
  \end{subfigure}
  
  \caption{(Left) Rendering of the unmodified ``Run to Score with Keeper'' scenario. (Right) Rendering of the modified scenario used for evaluation. For added complexity, two of the defenders now start between the ego-agent and the goal. In both scenarios, the ego-agent begins with possession of the ball, with the objective of scoring against the keeper while avoiding defenders.}
  \label{fig:grf_render}
\end{figure}

\section{Opponent Policies}
\label{sec:opponent-policies}

\begin{figure}
  \centering
    \includegraphics[width=0.4\textwidth]
        {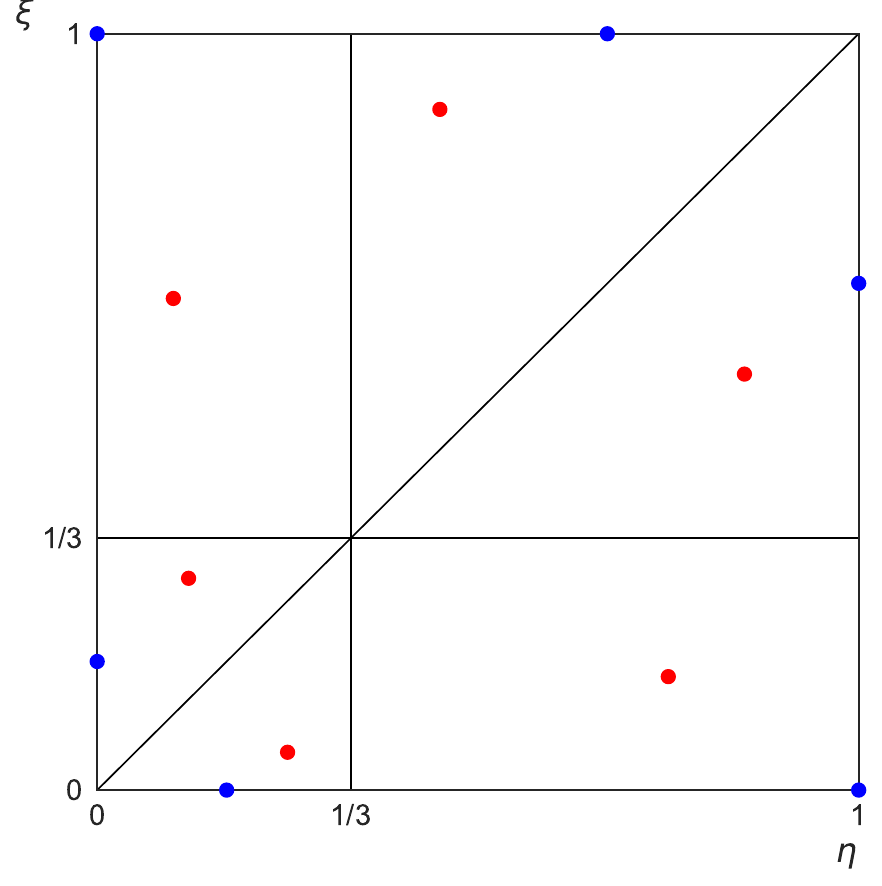}
    \caption{
    Visualization of the $P2$ strategy space in Kuhn Poker, with the $x$, $y$ axes corresponding to $\eta$, $\xi$ parameter values. We mark the $\Pi^{train}$ heuristics as blue points, and the randomly sampled heuristics for the unseen setting as red points.
    }
    \label{fig:kuhn_strategies}
\end{figure}

In our implementation of Kuhn Poker, the opponent always plays as the second player ($P2$). Viable $P2$ strategies can be parameterized by two variables, $\eta$ and $\xi$, which govern the probability of betting at two specific decision points (facing a pass with a Jack, and facing a bet with a Queen)~\citep{southey2009}. The optimal strategy lies at $\eta = 1/3$ and $\xi = 1/3$, and for all other values, there exists a single winning strategy for player 1 ($P1$). \Cref{fig:kuhn_strategies} depicts the partitioning of the $P2$ strategy space into regions where strategies share the same winning $P1$ counter-strategy. In blue, we mark the six $(\eta, \xi)$ configurations we selected for the training heuristic set, $\Pi^{train}$. For the unseen opponent setting, we randomly sampled six more heuristics, one for each $P2$ strategy region; we mark those in red.

Opponent strategies in the POPP and LBF environments are pre-trained using the MAPPO algorithm~\citep{yu2022} with no parameter sharing. 
Prior work has established that the diversity of the opponent policies encountered has a high impact on opponent modeling performance~\citep{fu_greedy_2022, omis, jing2025}. Therefore, as detailed in \Cref{sec:eval}, we generate multiple sets of training and testing opponents per environment, each with a different level of diversity. 
Inspired by OMIS~\citep{omis}, we adapt the Maximum Entropy Population-Based Training algorithm~\citep{zhao2023}. We modify the original implementation\footnote{\url{https://github.com/ruizhaogit/maximum_entropy_population_based_training}, MIT License} to train opponents sequentially rather than in a round-robin fashion. This process maximizes an auxiliary objective: the divergence between the action distribution of the agent currently being trained and the population mean of the previously trained agents. We control the strength of this objective via a hyperparameter multiplier $\alpha$. Specifically, for the low and high diversity settings, we use $\alpha=0.0005$ and $0.001$ in LBF, and $\alpha=0.005$ and $0.01$ in POPP, respectively.

Finally, opponents in GRF are controlled by the standard built-in AI with the default, medium difficulty ($\theta=0.6$).

\section{Comparison of Individual Expert Embeddings}
\label{sec:individual-expert-analysis}

\begin{figure}
  \centering
    \includegraphics[width=1.0\textwidth]
        {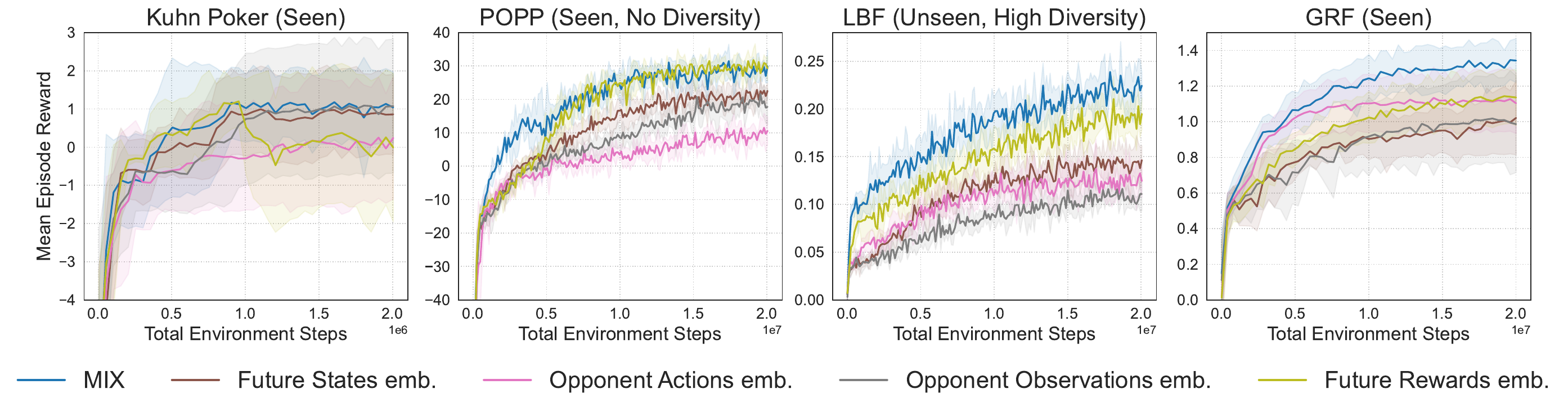}
    \caption{
    Comparing \mixer{} against baselines that model individual expert embeddings in four environments. Results averaged over multiple random seeds (KP: $10$,
    POPP/LBF: $5$, GRF: $20$).
    }
    \label{fig:intention_ablation}
\end{figure}

Ideally, \mixer{} should perform at least as well as a policy conditioned on any of its components. To validate this, we compare \mixer{} against four new baselines where the backbone PPO policy is directly conditioned on a single raw embedding $z^k \in \{z^a, z^o, z^s, z^r\}$ without using the gating network. 

\Cref{fig:intention_ablation} presents this evaluation for representative configurations across the four environments. We observe that the effectiveness of individual baselines is highly environment-dependent. For example, Future Rewards dominates in POPP and LBF but struggles in Kuhn Poker, where Opponent Observations and Future States excel. Meanwhile, Opponent Actions generally performs poorly but proves highly competitive in GRF. Notably, \mixer{} always at least matches the best individual expert in a given environment, and even significantly outperforms it in LBF and GRF. This further demonstrates that \mixer{} effectively synthesizes information from multiple experts, achieving representational capabilities that no single expert can attain on its own.

\section{Computational Complexity Analysis}
\label{sec:complexity-analysis}

To further contextualize the performance of \mixer{}, we provide a detailed computational complexity and runtime analysis. Metrics were evaluated on the POPP task using an Intel Xeon Platinum 8452Y CPU, with each run restricted to 4 CPU cores.

\begin{table}[h]
    \centering
    \caption{Computational complexity and runtime analysis. Forward FLOPs and inference time measure the cost of a single environment step, while total training time measures the wall-clock time required to complete training.}
    \label{tab:complexity}
    \begin{tabular}{lrrr}
        \toprule
        Model & Forward FLOPs & Inference time & Total training time \\
        \midrule
        NOM (No Intention) & 49.3 M & 1.83 ms & 3.18 h \\
        LIAM & 53.4 M & 2.73 ms & 3.43 h \\
        OMG & 57.9 M & 2.94 ms & 8.49 h$^*$ \\
        MeLIBA & 51.8 M & 3.36 ms & 25.58 h$^\dagger$ \\
        \midrule
        \textbf{\mixer{} (Ours)} & \textbf{61.0 M} & \textbf{5.18 ms} & \textbf{4.32 h} \\
        \bottomrule
    \end{tabular}
    \vspace{4pt}\\
    \small{$^*$Includes the required pretraining phase. $^\dagger$Reflects the quadratic-complexity update pass discussed below.}
\end{table}

As shown in \Cref{tab:complexity}, \mixer{} introduces a modest increase in forward computation to enable its significant gains in representation capacity.
Each of \mixer{}'s four expert encoders is approximately the same size as the single encoder used in the opponent modeling baselines; consequently, \mixer{} utilizes roughly $4\times$ more parameters for the opponent modeling component. While this naturally yields a higher overall parameter count and inference latency ($5.18$ ms), this overhead is negligible in absolute terms. An inference time of $5.18$ ms supports an execution rate of nearly $200$ FPS, falling well within standard real-time MARL latency budgets. Furthermore, the total footprint of \mixer{} is $61.0$ M FLOPs, a marginal increase over the baselines, as the bulk of the computational weight remains within the shared PPO backbone.

Furthermore, recent state-of-the-art baselines introduce severe training-time bottlenecks. Specifically, OMG requires a separate, costly pretraining phase, while MeLIBA relies on a quadratic-complexity update pass. By avoiding these structural bottlenecks, \mixer{} fully trains in just $4.32$ hours. This represents a massive improvement in training efficiency, saving over $4$ hours compared to OMG and $21$ hours compared to MeLIBA.

\section{Implementation Details}
\label{sec:implementation-details}

\begin{table}[ht]
\centering
\caption{Hyperparameters used for the POPP, LBF, GRF and Kuhn Poker environments.}
\label{tab:hyperparameters}
\begin{tabular}{l l c c c c}
\toprule
\textbf{Category} & \textbf{Hyperparameter} & \textbf{POPP} & \textbf{LBF} & \textbf{GRF} & \textbf{Kuhn Poker} \\
\midrule
\multicolumn{6}{l}{\textit{Environment \& Training Loop}} \\
\midrule
& Max Episode Steps & 100 & 50 & 200 & 100\\
& \# Opponents & 3 & 1 & 6 & 1\\
& Total Iterations & 3340 & 3340 & 2500 & 400 \\
& Steps per Iteration & 6000 & 6000 & 8000 & 5000 \\
& Eval Frequency (iters) & 20 & 20 & 50 & 10 \\
& Eval Episodes & 200 & 200 & 200 & 10,000 \\
\midrule
\multicolumn{6}{l}{\textit{PPO Algorithm}} \\
\midrule
& Actor \& Critic Layers & 3 & 3 & 3 & 3\\
& Layer Hidden Dim & 256 & 256 & 256 & 256 \\
& Learning Rate & $3 \times 10^{-4}$ & $3 \times 10^{-4}$ & $3 \times 10^{-4}$ & $1 \times 10^{-4}$ \\
& Entropy Coeff & 0.05 & 0.001 & 0.003 & 0.01 \\
& Gamma ($\gamma$) & 0.99 & 0.99 & 0.993 & 0.99 \\
& Lambda ($\lambda$) & 0.95 & 0.95 & 0.95 & 0.95 \\
& Clip Epsilon ($\epsilon$) & 0.2 & 0.2 & 0.2 & 0.2 \\
& Max Grad Norm & 0.5 & 0.5 & 0.64 & 0.5 \\
& Epochs & 5 & 5 & 5 & 5 \\
& Minibatch Size & 10 & 10 & 5 & 10 \\
\midrule
\multicolumn{6}{l}{\mixer{}} \\
\midrule
& Learning Rate & $3 \times 10^{-4}$ & $3 \times 10^{-4}$ & $3 \times 10^{-4}$ & $5 \times 10^{-4}$ \\
& Embedding Dim & 20 & 20 & 32 & 8 \\
& Layer Hidden Dim & 128 & 128 & 128 & 128 \\
& Minibatch Size & 20 & 20 & 10 & 10 \\
& Epochs & 5 & 5 & 5 & 5\\
& Future Rewards Horizon & 5 & 2 & 10 & 2 \\
& Future Rewards T & 0.2 & 0.2 & 0.2 & 0.2 \\
& Future Rewards Decoder Dim & 5 & 5 & 5 & 5 \\
& Future States Horizon & 2 & 1 & 1 & 1 \\
\bottomrule
\end{tabular}
\end{table}

We use the TorchRL~\citep{bou2023torchrl} and VMAS~\citep{bettini2022vmas} frameworks to implement our experiments.
We implement all algorithms with PPO~\citep{schulman2017proximalpolicyoptimizationalgorithms} as the backbone, initialized identically. For LIAM~\citep{papoudakis_agent_2021}, we adopt the open-source implementation\footnote{\url{https://github.com/uoe-agents/LIAM}, MIT License}, and for OMG~\citep{yu2024}, we follow the supplementary material provided by the authors\footnote{\url{https://openreview.net/forum?id=Lt6wO0oZ8k}, CC BY 4.0 License}. As in their original work, we pretrain the VAE of OMG. In the partially observable environments Kuhn Poker and POPP, we ensure that the inputs to MeLIBA's hierarchical VAE and OMG's CVAE use only the ego-agent's own local observation (since the pretrained VAE in OMG is only used during training, it models global information $s_{t} \approx [o^1_{t}, o^{-1}_{t}]$ even under partial observability). The intention embedding size and encoder hidden layer size are consistent across all opponent modeling methods, and we train all algorithms for the same number of steps.

To generate embeddings, \mixer{} encoders pass observation sequences through an LSTM layer~\citep{Hochreiter97} followed by a linear layer with a ReLU activation. A final linear layer projects these activations to the output dimension. The corresponding decoder maps these embeddings back to the target space using a three-layer MLP with ReLU activations. The encoder and decoder are trained jointly using the Adam optimizer. \Cref{tab:hyperparameters} presents the hyperparameters used. 

We lightly tune the future rewards model hyperparameters, such as the horizon, for each environment. We note that to further increase adaptivity, one could employ multiple future rewards encoders within the \mixer{} architecture, each with a distinct horizon. \Cref{fig:horizon_ablation} presents the ablation results for different horizon lengths, showing that \mixer{} is generally robust to this hyperparameter.

\begin{figure}
  \centering
    \includegraphics[width=0.49\textwidth]
        {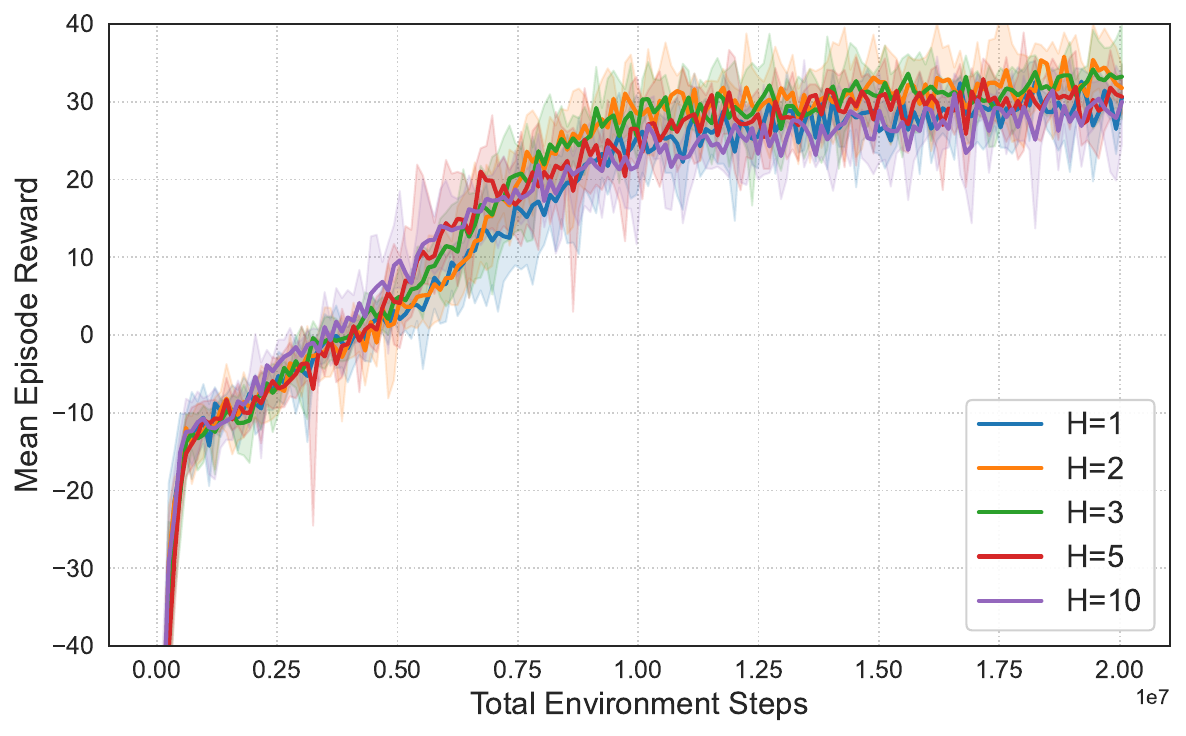}
    \caption{
    Performance comparison of the ego-agent policy conditioned on the Future Rewards embedding across different horizon lengths in the POPP environment (seen setting, no opponent diversity). Results averaged over $5$ random seeds.
    }
    \label{fig:horizon_ablation}
\end{figure}

\section{Compute Resources}
\label{sec:compute-resources}
The development and evaluation of our method required approximately $6,500$ CPU hours in total, broken down as follows:

\begin{itemize}[leftmargin=*,topsep=0ex,itemsep=0ex,parsep=0ex]
    \item Exploration \& Tuning: Preliminary experimentation and hyperparameter optimization required an estimated $500$ CPU hours using Intel Xeon Platinum 8452Y processors.
    \item Final Evaluation: Executing the final reported experiments across all environments and random seeds consumed approximately $6,000$ CPU hours on the same Intel Xeon Platinum 8452Y hardware, with standard individual runs restricted to 4 cores.
\end{itemize}

%% file: main.bib
@inproceedings{papoudakis_agent_2021,
	title = {Agent {Modelling} under {Partial} {Observability} for {Deep} {Reinforcement} {Learning}},
	volume = {34},
	urldate = {2026-01-14},
	booktitle = {Advances in {Neural} {Information} {Processing} {Systems}},
	author = {Papoudakis, Georgios and Christianos, Filippos and Albrecht, Stefano},
	year = {2021},
	pages = {19210--19222},
}

@inproceedings{yu2024,
author = {Yu, Xiaopeng and Jiang, Jiechuan and Lu, Zongqing},
title = {Opponent modeling based on subgoal inference},
year = {2024},
booktitle = {Proceedings of the 38th International Conference on Neural Information Processing Systems},
articleno = {1936},
numpages = {25},
}

@inproceedings{meliba,
author = {Zintgraf, Luisa and Devlin, Sam and Ciosek, Kamil and Whiteson, Shimon and Hofmann, Katja},
title = {Deep Interactive Bayesian Reinforcement Learning via Meta-Learning},
year = {2021},
booktitle = {Proceedings of the 20th International Conference on Autonomous Agents and MultiAgent Systems},
pages = {1712–1714},
numpages = {3},
}

@article{albrecht_autonomous_2018,
	title = {Autonomous agents modelling other agents: {A} comprehensive survey and open problems},
	volume = {258},
	shorttitle = {Autonomous agents modelling other agents},
	urldate = {2026-01-14},
	journal = {Artificial Intelligence},
	author = {Albrecht, Stefano V. and Stone, Peter},
	year = {2018},
	pages = {66--95},
}

@InProceedings{lili,
  title = 	 {Learning Latent Representations to Influence Multi-Agent Interaction},
  author =       {Xie, Annie and Losey, Dylan and Tolsma, Ryan and Finn, Chelsea and Sadigh, Dorsa},
  booktitle = 	 {Proceedings of the 2020 Conference on Robot Learning},
  pages = 	 {575--588},
  year = 	 {2021},
  volume = 	 {155},
}

@article{
shapley1953stochastic,
author = {L. S. Shapley },
title = {Stochastic Games},
journal = {Proceedings of the National Academy of Sciences},
volume = {39},
number = {10},
pages = {1095-1100},
year = {1953},
eprint = {https://www.pnas.org/doi/pdf/10.1073/pnas.39.10.1095}}

@inproceedings{fu_greedy_2022,
	title = {Greedy when {Sure} and {Conservative} when {Uncertain} about the {Opponents}},
	language = {en},
	urldate = {2026-01-14},
	booktitle = {Proceedings of the 39th {International} {Conference} on {Machine} {Learning}},
	author = {Fu, Haobo and Tian, Ye and Yu, Hongxiang and Liu, Weiming and Wu, Shuang and Xiong, Jiechao and Wen, Ying and Li, Kai and Xing, Junliang and Fu, Qiang and Yang, Wei},
	year = {2022},
	pages = {6829--6848},
}

@InProceedings{pmlr-v80-grover18a,
  title = 	 {Learning Policy Representations in Multiagent Systems},
  author =       {Grover, Aditya and Al-Shedivat, Maruan and Gupta, Jayesh and Burda, Yuri and Edwards, Harrison},
  booktitle = 	 {Proceedings of the 35th International Conference on Machine Learning},
  pages = 	 {1802--1811},
  year = 	 {2018},
  volume = 	 {80},
}

@inproceedings{omis,
author = {Jing, Yuheng and Liu, Bingyun and Li, Kai and Zang, Yifan and Fu, Haobo and Fu, Qiang and Xing, Junliang and Cheng, Jian},
title = {Opponent modeling with in-context search},
year = {2024},
booktitle = {Proceedings of the 38th International Conference on Neural Information Processing Systems},
articleno = {1967},
numpages = {43},
}

@inproceedings{
tacchetti2018relational,
title={Relational Forward Models for Multi-Agent Learning},
author={Andrea Tacchetti and H. Francis Song and Pedro A. M. Mediano and Vinicius Zambaldi and János Kramár and Neil C. Rabinowitz and Thore Graepel and Matthew Botvinick and Peter W. Battaglia},
booktitle={International Conference on Learning Representations},
year={2019},
}

@inproceedings{willi_cola_2022,
	title = {{COLA}: {Consistent} {Learning} with {Opponent}-{Learning} {Awareness}},
	shorttitle = {{COLA}},
	language = {en},
	urldate = {2026-01-19},
	booktitle = {Proceedings of the 39th {International} {Conference} on {Machine} {Learning}},
	author = {Willi, Timon and Letcher, Alistair Hp and Treutlein, Johannes and Foerster, Jakob},
	year = {2022},
	pages = {23804--23831},
}

@inproceedings{foerster_learning_2018,
author = {Foerster, Jakob and Chen, Richard Y. and Al-Shedivat, Maruan and Whiteson, Shimon and Abbeel, Pieter and Mordatch, Igor},
title = {Learning with Opponent-Learning Awareness},
year = {2018},
booktitle = {Proceedings of the 17th International Conference on Autonomous Agents and MultiAgent Systems},
pages = {122–130},
numpages = {9},
}

@InProceedings{pmlr-v162-lu22d,
  title = 	 {Model-Free Opponent Shaping},
  author =       {Lu, Christopher and Willi, Timon and De Witt, Christian A Schroeder and Foerster, Jakob},
  booktitle = 	 {Proceedings of the 39th International Conference on Machine Learning},
  pages = 	 {14398--14411},
  year = 	 {2022},
  volume = 	 {162},
}

@InProceedings{pmlr-v48-he16,
  title = 	 {Opponent Modeling in Deep Reinforcement Learning},
  author =       {He, He and Boyd-Graber, Jordan and Kwok, Kevin and Daum\'e, III, Hal},
  booktitle = 	 {Proceedings of The 33rd International Conference on Machine Learning},
  pages = 	 {1804--1813},
  year = 	 {2016},
  volume = 	 {48},
}

@inproceedings{Kingma2014,
  title     = {Auto-Encoding Variational Bayes},
  author    = {Kingma, Diederik P. and Welling, Max},
  booktitle = {Proceedings of the 2nd International Conference on Learning Representations (ICLR)},
  year      = {2014},
}

@misc{mnih2013playingatarideepreinforcement,
      title={Playing Atari with Deep Reinforcement Learning}, 
      author={Volodymyr Mnih and Koray Kavukcuoglu and David Silver and Alex Graves and Ioannis Antonoglou and Daan Wierstra and Martin Riedmiller},
      year={2013},
      eprint={1312.5602},
      archivePrefix={arXiv},
      primaryClass={cs.LG},
}

@article{premack_does_1978,
	title = {Does the chimpanzee have a theory of mind?},
	volume = {1},
	copyright = {https://www.cambridge.org/core/terms},
	language = {en},
	number = {4},
	urldate = {2025-05-29},
	journal = {Behavioral and Brain Sciences},
	author = {Premack, David and Woodruff, Guy},
	year = {1978},
	pages = {515--526},
}

@inproceedings{wu_too_2021,
author = {Wang, Rose E. and Wu, Sarah A. and Evans, James A. and Tenenbaum, Joshua B. and Parkes, David C. and Kleiman-Weiner, Max},
title = {Too Many Cooks: Coordinating Multi-agent Collaboration Through Inverse Planning},
year = {2020},
booktitle = {Proceedings of the 19th International Conference on Autonomous Agents and MultiAgent Systems},
pages = {2032–2034},
numpages = {3},
}

@inproceedings{huang_efficient_2024,
	title = {Efficient adaptation in mixed-motive environments via hierarchical opponent modeling and planning},
	booktitle = {Proceedings of the 41st {International} {Conference} on {Machine} {Learning}},
	author = {Huang, Yizhe and Liu, Anji and Kong, Fanqi and Yang, Yaodong and Zhu, Song-Chun and Feng, Xue},
	year = {2024},
}

@inproceedings{raileanu_modeling_2018,
	title = {Modeling {Others} using {Oneself} in {Multi}-{Agent} {Reinforcement} {Learning}},
	volume = {80},
	booktitle = {Proceedings of the 35th {International} {Conference} on {Machine} {Learning}},
	author = {Raileanu, Roberta and Denton, Emily and Szlam, Arthur and Fergus, Rob},
	year = {2018},
	pages = {4257--4266},
}

@inproceedings{rabinowitz_machine_2018,
	title = {Machine {Theory} of {Mind}},
	volume = {80},
	booktitle = {Proceedings of the 35th {International} {Conference} on {Machine} {Learning}},
	author = {Rabinowitz, Neil and Perbet, Frank and Song, Francis and Zhang, Chiyuan and Eslami, S. M. Ali and Botvinick, Matthew},
	year = {2018},
	pages = {4218--4227},
}

@inproceedings{zhao2017learning,
author = {Zhao, Shengjia and Song, Jiaming and Ermon, Stefano},
title = {Learning hierarchical features from deep generative models},
year = {2017},
booktitle = {Proceedings of the 34th International Conference on Machine Learning - Volume 70},
pages = {4091–4099},
numpages = {9},
}

@inproceedings{lowe2017,
author = {Lowe, Ryan and Wu, Yi and Tamar, Aviv and Harb, Jean and Abbeel, Pieter and Mordatch, Igor},
title = {Multi-agent actor-critic for mixed cooperative-competitive environments},
year = {2017},
booktitle = {Proceedings of the 31st International Conference on Neural Information Processing Systems},
pages = {6382–6393},
numpages = {12},
}

@inproceedings{bohmer2020,
author = {B\"{o}hmer, Wendelin and Kurin, Vitaly and Whiteson, Shimon},
title = {Deep coordination graphs},
year = {2020},
booktitle = {Proceedings of the 37th International Conference on Machine Learning},
articleno = {92},
numpages = {12},
}

@inproceedings{zhao2023,
author = {Zhao, Rui and Song, Jinming and Yuan, Yufeng and Hu, Haifeng and Gao, Yang and Wu, Yi and Sun, Zhongqian and Yang, Wei},
title = {Maximum entropy population-based training for zero-shot human-AI coordination},
year = {2023},
booktitle = {Proceedings of the Thirty-Seventh AAAI Conference on Artificial Intelligence and Thirty-Fifth Conference on Innovative Applications of Artificial Intelligence and Thirteenth Symposium on Educational Advances in Artificial Intelligence},
articleno = {689},
numpages = {9},
}

@misc{schulman2017proximalpolicyoptimizationalgorithms,
      title={Proximal Policy Optimization Algorithms}, 
      author={John Schulman and Filip Wolski and Prafulla Dhariwal and Alec Radford and Oleg Klimov},
      year={2017},
      eprint={1707.06347},
      archivePrefix={arXiv},
      primaryClass={cs.LG},
}

@inproceedings{christianos2020shared,
  title={Shared Experience Actor-Critic for Multi-Agent Reinforcement Learning},
  author={Christianos, Filippos and Schäfer, Lukas and Albrecht, Stefano V},
  booktitle = {Advances in Neural Information Processing Systems (NeurIPS)},
  year={2020}
}

@inproceedings{papoudakis2021benchmarking,
   title={Benchmarking Multi-Agent Deep Reinforcement Learning Algorithms in Cooperative Tasks},
   author={Georgios Papoudakis and Filippos Christianos and Lukas Schäfer and Stefano V. Albrecht},
   booktitle = {Proceedings of the Neural Information Processing Systems Track on Datasets and Benchmarks (NeurIPS)},
   year={2021},
   openreview = {https://openreview.net/forum?id=cIrPX-Sn5n},
}

@article{LanctotEtAl2019OpenSpiel,
  title     = {{OpenSpiel}: A Framework for Reinforcement Learning in Games},
  author    = {Marc Lanctot and Edward Lockhart and Jean-Baptiste Lespiau and
               Vinicius Zambaldi and Satyaki Upadhyay and Julien P\'{e}rolat and
               Sriram Srinivasan and Finbarr Timbers and Karl Tuyls and
               Shayegan Omidshafiei and Daniel Hennes and Dustin Morrill and
               Paul Muller and Timo Ewalds and Ryan Faulkner and J\'{a}nos Kram\'{a}r
               and Bart De Vylder and Brennan Saeta and James Bradbury and David Ding
               and Sebastian Borgeaud and Matthew Lai and Julian Schrittwieser and
               Thomas Anthony and Edward Hughes and Ivo Danihelka and Jonah Ryan-Davis},
  year      = {2019},
  eprint    = {1908.09453},
  archivePrefix = {arXiv},
  primaryClass = {cs.LG},
  journal   = {CoRR},
  volume    = {abs/1908.09453},
}

@misc{oord2019representationlearningcontrastivepredictive,
      title={Representation Learning with Contrastive Predictive Coding}, 
      author={Aaron van den Oord and Yazhe Li and Oriol Vinyals},
      year={2019},
      eprint={1807.03748},
      archivePrefix={arXiv},
      primaryClass={cs.LG},
}

@inproceedings{liu2023,
author = {Liu, Shunyu and Zhou, Yihe and Song, Jie and Zheng, Tongya and Chen, Kaixuan and Zhu, Tongtian and Feng, Zunlei and Song, Mingli},
title = {Contrastive identity-aware learning for multi-agent value decomposition},
year = {2023},
booktitle = {Proceedings of the Thirty-Seventh AAAI Conference on Artificial Intelligence and Thirty-Fifth Conference on Innovative Applications of Artificial Intelligence and Thirteenth Symposium on Educational Advances in Artificial Intelligence},
articleno = {1301},
numpages = {9},
}

@inproceedings{
lo2024learning,
title={Learning Multi-Agent Communication with Contrastive Learning},
author={Yat Long Lo and Biswa Sengupta and Jakob Nicolaus Foerster and Michael Noukhovitch},
booktitle={The Twelfth International Conference on Learning Representations},
year={2024},
}

@misc{jiang2024mixtralexperts,
      title={Mixtral of Experts}, 
      author={Albert Q. Jiang and Alexandre Sablayrolles and Antoine Roux and Arthur Mensch and Blanche Savary and Chris Bamford and Devendra Singh Chaplot and Diego de las Casas and Emma Bou Hanna and Florian Bressand and Gianna Lengyel and Guillaume Bour and Guillaume Lample and Lélio Renard Lavaud and Lucile Saulnier and Marie-Anne Lachaux and Pierre Stock and Sandeep Subramanian and Sophia Yang and Szymon Antoniak and Teven Le Scao and Théophile Gervet and Thibaut Lavril and Thomas Wang and Timothée Lacroix and William El Sayed},
      year={2024},
      eprint={2401.04088},
      archivePrefix={arXiv},
      primaryClass={cs.LG},
}

@inproceedings{
shazeer2017,
title={ Outrageously Large Neural Networks: The Sparsely-Gated Mixture-of-Experts Layer},
author={Noam Shazeer and Azalia Mirhoseini and Krzysztof Maziarz and Andy Davis and Quoc Le and Geoffrey Hinton and Jeff Dean},
booktitle={International Conference on Learning Representations},
year={2017},
}

@incollection{Kuhn1950,
  title     = {A Simplified Two-Person Poker},
  author    = {Kuhn, H. W.},
  booktitle = {Contributions to the Theory of Games, Volume I},
  series    = {Annals of Mathematics Studies},
  publisher = {Princeton University Press},
  year      = {1950},
  pages     = {97--103},
}

@article{jacobs1991,
    author = {Jacobs, Robert A. and Jordan, Michael I. and Nowlan, Steven J. and Hinton, Geoffrey E.},
    title = {Adaptive Mixtures of Local Experts},
    journal = {Neural Computation},
    volume = {3},
    number = {1},
    pages = {79-87},
    year = {1991},
    eprint = {https://direct.mit.edu/neco/article-pdf/3/1/79/812104/neco.1991.3.1.79.pdf},
}

@inproceedings{jing2025,
author = {Jing, Yuheng and Li, Kai and Liu, Bingyun and Fu, Haobo and Fu, Qiang and Xing, Junliang and Cheng, Jian},
title = {An open-ended learning framework for opponent modeling},
year = {2025},
booktitle = {Proceedings of the Thirty-Ninth AAAI Conference on Artificial Intelligence and Thirty-Seventh Conference on Innovative Applications of Artificial Intelligence and Fifteenth Symposium on Educational Advances in Artificial Intelligence},
articleno = {2590},
numpages = {9},
}

@article{southey2009,
author = {Southey, Finnegan and Hoehn, Bret and Holte, Robert C.},
title = {Effective short-term opponent exploitation in simplified poker},
year = {2009},
issue_date = {February  2009},
volume = {74},
number = {2},
journal = {Mach. Learn.},
pages = {159–189},
numpages = {31}
}

@inproceedings{yu2022,
author = {Yu, Chao and Velu, Akash and Vinitsky, Eugene and Gao, Jiaxuan and Wang, Yu and Bayen, Alexandre and Wu, Yi},
title = {The surprising effectiveness of PPO in cooperative multi-agent games},
year = {2022},
booktitle = {Proceedings of the 36th International Conference on Neural Information Processing Systems},
articleno = {1787},
numpages = {14},
}

@book{Oliehoek2016,
author = {Oliehoek, Frans A. and Amato, Christopher},
title = {A Concise Introduction to Decentralized POMDPs},
year = {2016},
publisher = {Springer Publishing Company, Incorporated},
edition = {1st},
}

@misc{bou2023torchrl,
      title={{TorchRL}: A data-driven decision-making library for PyTorch}, 
      author={Albert Bou and Matteo Bettini and Sebastian Dittert and Vikash Kumar and Shagun Sodhani and Xiaomeng Yang and Gianni De Fabritiis and Vincent Moens},
      year={2023},
      eprint={2306.00577},
      archivePrefix={arXiv},
      primaryClass={cs.LG}
}

@article{bettini2022vmas,
  title = {{VMAS}: A Vectorized Multi-Agent Simulator for Collective Robot Learning},
  author = {Bettini, Matteo and Kortvelesy, Ryan and Blumenkamp, Jan and Prorok, Amanda},
  year = {2022},
  journal={The 16th International Symposium on Distributed Autonomous Robotic Systems},
  publisher={Springer}
}

@article{Hochreiter97,
author = {Hochreiter, Sepp and Schmidhuber, J\"{u}rgen},
title = {Long Short-Term Memory},
year = {1997},
issue_date = {November 15, 1997},
publisher = {MIT Press},
volume = {9},
number = {8},
journal = {Neural Comput.},
pages = {1735–1780},
numpages = {46}
}

@misc{shrikumar2017justblackboxlearning,
      title={Not Just a Black Box: Learning Important Features Through Propagating Activation Differences}, 
      author={Avanti Shrikumar and Peyton Greenside and Anna Shcherbina and Anshul Kundaje},
      year={2017},
      eprint={1605.01713},
      archivePrefix={arXiv},
      primaryClass={cs.LG},
}

@misc{kurach2020googleresearchfootballnovel,
      title={Google Research Football: A Novel Reinforcement Learning Environment}, 
      author={Karol Kurach and Anton Raichuk and Piotr Stańczyk and Michał Zając and Olivier Bachem and Lasse Espeholt and Carlos Riquelme and Damien Vincent and Marcin Michalski and Olivier Bousquet and Sylvain Gelly},
      year={2020},
      eprint={1907.11180},
      archivePrefix={arXiv},
      primaryClass={cs.LG},
}
